\newcommand{\trace}{{\rm Tr}}
\newcommand{\Grassmann}{{\mathrm{Gr}(r,d)}}
\newcommand{\Cone}[1]{S_+({#1})}
\newcommand{\FixedRank}{S_+(r,d)}
\newcommand{\Stiefel}{{\mathrm{St}(r,d)}}
\newcommand{\OG}[1]{{\mathcal{O}({#1})}}
\newcommand{\GL}[1]{{\mathrm{GL}({#1})}}
\newcommand{\R}{R}
\newcommand{\Exp}{{\mathrm{Exp}}}
\newcommand{\Sym}{{\mathrm{Sym}}}
\newcommand{\qf}{\mathrm{qf}}
\newcommand{\Proj}{\Pi}
\newcommand{\set}[2]{{\{{#1}\ \mathrm{s.t.}\ {#2}\}}}
\newcommand{\argmin}{\operatornamewithlimits{arg\,min}}
\newcommand{\grad}[2]{\mathrm{grad}_{{#1}}{{#2}}}
\newcommand{\mat}[1]{{\bf #1}}
\renewcommand{\vec}[1]{{\bf #1}}
\newtheorem{proposition}{Proposition} [section]
\newcommand{\footnoteremember}[2]{\footnote{#2}\newcounter{#1}\setcounter{#1}{\value{footnote}}}
\newcommand{\footnoterecall}[1]{\footnotemark[\value{#1}]}
\title{Regression on Fixed-Rank Positive Semidefinite Matrices:\\
     a Riemannian Approach}
\author{Gilles Meyer\footnoteremember{ulg}{Department of Electrical Engineering and Computer Science, University of Li\`{e}ge, B-4000 Li\`{e}ge, Belgium,\hfill\texttt{\{g.meyer,r.sepulchre\}@ulg.ac.be}.}\and Silv\`{e}re Bonnabel\footnote{Robotics center, Mines ParisTech, Boulevard Saint-Michel, 60, 75272 Paris, France,\hfill\texttt{silvere.bonnabel@mines-paristech.fr}} \and Rodolphe Sepulchre\footnoterecall{ulg}}
\begin{document}
\maketitle
\section*{Abstract}
The paper addresses the problem of learning a regression model parameterized by a fixed-rank positive semidefinite matrix. The focus is on the nonlinear nature of the search space and on scalability to high-dimensional problems. The mathematical developments rely on the theory of gradient descent algorithms adapted to the Riemannian geometry that underlies the set of fixed-rank positive semidefinite matrices. In contrast with previous contributions in the literature, no restrictions are imposed on the range space of the learned matrix. The resulting algorithms maintain a linear complexity in the problem size and enjoy important invariance properties. We apply the proposed algorithms to the problem of learning a distance function parameterized by a positive semidefinite matrix. Good performance is observed on classical benchmarks.

\section{Introduction}
A fundamental problem of machine learning is the learning of a distance between data samples. When the distance can be written as a quadratic form (either in the data space (Mahalanobis distance) or in a kernel feature space (kernel distance)), the learning problem is a regression problem on the set of positive definite matrices. The regression problem is turned into the minimization of the prediction error, leading to an optimization framework and gradient-based algorithms.

The present paper focuses on the nonlinear nature of the search space. The classical framework of gradient-based learning can be generalized provided that the nonlinear search space is equipped with a proper Riemannian geometry. Adopting this general framework, we design novel learning algorithms on the space of fixed-rank positive semidefinite matrices, denoted by $\FixedRank$, where $d$ is the dimension of the matrix, and $r$ is its rank. Learning a parametric model in $\FixedRank$ amounts to jointly learn a $r$-dimensional subspace and a quadratic distance in this subspace.

The framework is motivated by {\em low-rank learning} in large-scale applications. If the data space is of dimension $d$, the goal is to maintain a linear computational complexity $O(d)$. In contrast to the classical approach of first reducing the dimension of the data and then learning a distance in the reduced space, there is an obvious conceptual advantage to perform the two tasks simultaneously. If this objective can be achieved without increasing the numerical cost of the algorithm, the advantage becomes also practical.

Our approach makes use of two quotient geometries of the set $\FixedRank$ that have been recently studied by \cite{journee09a} and \cite{bonnabel09a}. Making use of a general theory of line-search algorithms in quotient matrix spaces \citep{absil08a}, we obtain concrete gradient updates that maintain the rank and the positivity of the learned model at each iteration. This is because the update is intrinsically constrained to belong to the nonlinear search space, in contrast to early learning algorithms that neglect the non linear nature of the search space in the update and impose the constraints a posteriori \citep{xing02a,globerson05a}.

Not surprisingly, our approach has close connections with a number of recent contributions on learning algorithms. Learning problems over nonlinear matrix spaces include the learning of subspaces \citep{crammer06a,warmuth07a}, rotation matrices \citep{arora09a}, and positive definite matrices \citep{tsuda05a}. The space of (full-rank) positive definite matrices $\Cone{d}$ is of particular interest since it coincides with our set of interest in the particular case $r=d$.

The use of Bregman divergences and alternating projection has been recently investigated for learning in $\Cone{d}$. \cite{tsuda05a} propose to use the {\em von Neumann} divergence, resulting in a generalization of the well-known AdaBoost algorithm~\citep{schapire99a} to positive definite matrices. The use of the so-called {\em LogDet} divergence has also been investigated by \cite{davis07a} in the context of Mahalanobis distance learning. 

More recently, algorithmic work has focused on scalability in terms of dimensionality and data set size. A natural extension of the previous work on positive definite matrices is thus to consider low-rank positive semidefinite matrices. Indeed, whereas algorithms based on full-rank matrices scale as $O(d^3)$ and require $O(d^2)$ storage units, algorithms based on low-rank matrices scale as $O(dr^2)$ and require $O(dr)$ storage units \citep{fine01a,bach05a}. This is a significant complexity reduction as the approximation rank $r$ is typically very small compared to the dimension of the problem $d$. 

Extending the work of \cite{tsuda05a}, \cite{kulis09a} recently considered the learning of positive semidefinite matrices. The authors consider Bregman divergence measures that enjoy convexity properties and lead to updates that preserve the rank as well as the positive semidefinite property. However, these divergence-based algorithms intrinsically constrain the learning algorithm to a fixed range space. A practical limitation of this approach is that the subspace of the learned matrix is fixed beforehand by the initial condition of the algorithm.

The approach proposed in the present paper is in a sense more classical (we just perform a line-search in a Riemannian manifold) but we show how to interpret Bregman divergence based algorithms in our framework. This is potentially a contribution of independent interest since a general convergence theory exists for line-search algorithms on Riemannian manifolds. The generality of the proposed framework is of course motivated by the non-convex nature of the rank constraint.

The paper is organized as follows. Section \ref{sec:regression-riemannian-space} presents the general optimization framework of Riemannian learning. This framework is then applied to the learning of subspaces (Section \ref{sec:grassmann}), positive definite matrices (Section \ref{sec:cone}) and fixed-rank positive semidefinite matrices (Section \ref{sec:fixed-rank}). The novel proposed algorithms are presented in Section \ref{sec:algorithms}. Section \ref{sec:discussion} discusses the relationship to existing work as well as extensions of the proposed approach. Applications are presented in Section \ref{sec:applications} and experimental results are presented in Section~\ref{sec:experiments}.

\section{Linear Regression on Riemannian Spaces}\label{sec:regression-riemannian-space}
We consider the following standard regression problem. Given
\begin{enumerate}
	\renewcommand{\theenumi}{\roman{enumi}}
	\renewcommand{\labelenumi}{(\theenumi)}
	\item data points $\mat{X}$, in a linear data space $\mathcal{X}=\mathbb{R}^{d\times d}$,
	\item observations $y$, in a linear output space $\mathcal{Y}=\mathbb{R}$, (or $\mathbb{R}^d$),
	\item a regression model $\hat{y}=\hat{y}_\mat{W}(\mat{X})$ parameterized by a matrix $\mat{W}$ in a search space $\mathcal{W}$,
	\item a quadratic loss function $\ell(\hat{y},y) = \frac{1}{2}(\hat{y} - y)^2$,
\end{enumerate}
find the optimal fit $\mat{W}^*$ that minimizes the {\em expected cost}
\begin{equation*}\label{eq:expected-loss}
	F(\mat{W}) = \mathbb{E}_{\mat{X},y}\{\ell(\hat{y},y)\} 
						 = \int \ell(\hat{y},y)\ dP(\mat{X},y),
\end{equation*}
where $\ell(\hat{y},y)$ penalizes the discrepancy between observations and predictions, and $P(\mat{X},y)$ is the (unknown) joint probability distribution over data and observation pairs. Although our main interest will be in the scalar model
\begin{equation*}
	\hat{y} = \trace(\mat{W}\mat{X}),
\end{equation*}
the theory applies equally to vector data points $\vec{x}\in\mathbb{R}^{d}$, $\hat{y}=\trace(\mat{W}\vec{x}\vec{x}^T)=\vec{x}^T\mat{W}\vec{x}$, to a regression model parameterized by a vector $\vec{w}\in\mathbb{R}^d$, $\hat{y}=\vec{w}^T\vec{x}$, or to a vector output space $\hat{y}=\mat{W}\vec{x}$.

As it is generally not possible to compute $F(\mat{W})$ explicitly, batch learning algorithms minimize instead the {\em empirical cost}
\begin{equation}\label{eq:empirical-cost}
	f_{n}(\mat{W}) = \frac{1}{2n} \sum_{i = 1}^{n} (\hat{y}_i - y_i)^2,
\end{equation}
which is the average loss computed over a finite number of samples $\{(\mat{X}_i,y_i)\}_{i=1}^n$.

Online learning algorithms \citep{bottou04a} consider possibly infinite sets of samples $\{(\mat{X}_t,y_t)\}_{t\geq 1}$, received one at a time. At time $t$, the online learning algorithm minimizes the instantaneous cost
\begin{equation*}
	f_{t}(\mat{W}) = \frac{1}{2}(\hat{y}_t - y_t)^2.
\end{equation*}
In the sequel, we only present online versions of algorithms to shorten the exposition. The single necessary change to convert an online algorithm into its batch counterpart is to perform, at each iteration, the minimization of the empirical cost $f_{n}$ instead of the minimization of the instantaneous cost $f_{t}$. In the sequel, we denote by $f$ the cost function that is minimized at each iteration.

Our focus will be on {\em nonlinear} search spaces $\mathcal{W}$. We only require $\mathcal{W}$ to have the structure of a Riemannian matrix manifold. Following \cite{absil08a}, an abstract gradient descent algorithm can then be derived based on the update formula
\begin{equation}\label{eq:linesearch-manifold}
	\mat{W}_{t+1} = \R_{\mat{W}_t}(-s_t\ \grad{}{f(\mat{W}_t)}).
\end{equation}
The gradient $\grad{}{f(\mat{W}_t)}$ is an element of the tangent space $T_{\mat{W}_t}\mathcal{W}$. The scalar $s_t > 0$ is the step size. The retraction $\R_{\mat{W}_t}$ is a mapping from the tangent space $T_{\mat{W}_t}\mathcal{W}$ to the Riemannian manifold. Under mild conditions on the retraction $\R$, the classical convergence theory of line-search algorithms in linear spaces generalizes to Riemannian manifolds \citep[see][Chapter 4]{absil08a}.

Observe that the standard (online) learning algorithm for linear regression in $\mathbb{R}^d$,
\begin{equation}\label{eq:update-linreg}
	\vec{w}_{t+1} = \vec{w}_t - s_t (\vec{w}_t^T\vec{x}_t - y_t)\vec{x}_t,
\end{equation}
can be interpreted as a particular case of \eqref{eq:linesearch-manifold} for the linear model $\hat{y}=\vec{w}^T\vec{x}$ in the {\em linear} search space $\mathcal{W}=\mathbb{R}^d$. The Euclidean metric turns $\mathbb{R}^d$ in a (flat) Riemannian manifold. For a scalar function $f:\mathbb{R}^d\rightarrow\mathbb{R}$ of $\vec{w}$, the gradient satisfies 
\begin{equation*}
	Df(\vec{w})[\boldsymbol\delta] = \boldsymbol\delta^T\grad{}{f(\vec{w})},
\end{equation*}
where $Df(\vec{w})[\boldsymbol\delta]$ is the directional derivative of $f$ in the direction $\boldsymbol\delta$, and the natural retraction
\begin{equation*}
	\R_{\vec{w}_t}(-s_t\ \grad{}{f(\vec{w}_t)}) = \vec{w}_t - s_t\ \grad{}{f(\vec{w}_t)},
\end{equation*}
induces a line-search along ``straight lines" which are geodesics (that is paths of shortest length) in linear spaces. With $f(\vec{w})=\frac{1}{2}(\vec{w}^T\vec{x} - y)^2$, one arrives at \eqref{eq:update-linreg}.

This example illustrates that the main ingredients to obtain a concrete algorithm are convenient formulas for the gradient and for the retraction mapping. This paper provides such formulas for three examples of nonlinear matrix search spaces: the Grassmann manifold (Section \ref{sec:grassmann}), the cone of positive definite matrices (Section \ref{sec:cone}), and the set of fixed-rank positive semidefinite matrices (Section \ref{sec:fixed-rank}). Each of those sets will be equipped with {\em quotient Riemannian geometries} that provide convenient formulas for the gradient and for the retractions. Line-search algorithms in quotient Riemannian spaces are discussed in detail in the book of \cite{absil08a}. For the readers convenience, basic concepts and notations are introduced in the next section.

\section{Line-Search Algorithms on Matrix Manifolds}\label{sec:line-search-manifolds}
\begin{figure}[!ht]
	\centering
	\includegraphics[width=250pt]{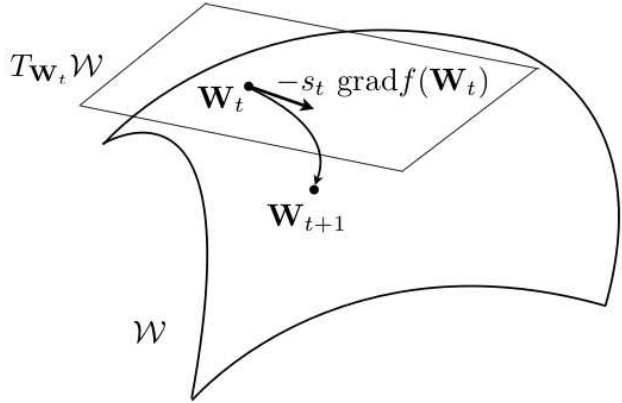}
	\caption{Gradient iteration on a Riemannian manifold. The search direction $-\grad{}{f(\mat{W}_t)}$ belongs to the tangent space $T_{\mat{W}_t}\mathcal{W}$. The updated point $\mat{W}_{t+1}$ automatically remains inside the manifold thanks to the retraction mapping.}
	\label{fig:retraction}
\end{figure}

This section summarizes the exposition of \citet[chap. 3 and 4]{absil08a}.

Restrictions on the search space are generally encoded into optimization algorithms by means of particular constraints or penalties expressed as a function of the search variable. However, when the search space is endowed with a particular manifold structure, it is possible to design an exploration strategy that is consistent with the geometry of the problem and that appropriately turns the problem into an unconstrained optimization problem. This approach is the purpose of optimization algorithms defined on matrix manifolds.

Informally, a manifold $\mathcal{W}$ is a space endowed with a differentiable structure. One usually makes the distinction between embedded submanifolds (subsets of larger manifolds) and quotient manifolds (manifolds described by a set of equivalence classes). An intuitive example of embedded submanifold is the sphere embedded in $\mathbb{R}^d$. A typical example of quotient manifold is the set of $r$-dimensional subspaces in $\mathbb{R}^d$, viewed as a collection of $r$-dimensional orthogonal frames that cannot be superposed by a rotation. The rotational variants of a given frame thus define an equivalence class (denoted using square brackets $[\cdot]$), which is identified as a single point on the quotient manifold.

To develop line-search algorithms, the notion of gradient of a scalar cost function needs to be extended to manifolds. For that purpose, the manifold $\mathcal{W}$ is endowed with a metric $g_{\mat{W}}(\xi_{\mat{W}},\zeta_{\mat{W}})$, which is an inner product defined between elements $\xi_{\mat{W}},\zeta_{\mat{W}}$ of the tangent space $T_{\mat{W}}\mathcal{W}$ at $\mat{W}$. The metric induces a norm on the tangent space $T_{\mat{W}}\mathcal{W}$ at $\mat{W}$: 
\begin{equation*}
\|\xi_{\mat{W}}\|_\mat{W} = \sqrt{g_{\mat{W}}(\xi_{\mat{W}},\xi_{\mat{W}})}.
\end{equation*}
The gradient of a smooth scalar function $f:\mathcal{W}\rightarrow\mathbb{R}$ at $\mat{W}\in\mathcal{W}$ is the only element $\grad{}{f(\mat{W})}\in T_{\mat{W}}\mathcal{W}$ that satisfies
\begin{equation*}\label{eq:gradient-def}
Df(\mat{W})[\boldsymbol\Delta] 
	= g_{\mat{W}}(\boldsymbol\Delta,\grad{}{f(\mat{W})}), \quad \forall \boldsymbol\Delta\in T_{\mat{W}}\mathcal{W},
\end{equation*}
where $\boldsymbol\Delta$ is a matrix representation of a ``geometric" tangent vectors $\xi$, and where
\begin{equation*}
Df(\mat{W})[\boldsymbol\Delta]
  = \lim_{t\rightarrow 0} \frac{f(\mat{W} + t \boldsymbol\Delta) - f(\mat{W})}{t},
\end{equation*}
is the standard directional derivative of $f$ at $\mat{W}$ in the direction $\boldsymbol\Delta$.

For quotient manifolds $\mathcal{W}=\overline{\mathcal{W}}/\sim$, where $\overline{\mathcal{W}}$ is the total space and $\sim$ is the equivalence relation that defines the quotient, the tangent space $T_{[\mat{W}]}\mathcal{W}$ at $[\mat{W}]$ is sufficiently described by the directions that do not induce any displacement in the set of equivalence classes $[\mat{W}]$. This is achieved by restricting the tangent space at $[\mat{W}]$ to horizontal vectors $\bar{\xi}_{\mat{W}}\in T_\mat{W}\overline{\mathcal{W}}$ at $\mat{W}$ that are orthogonal to the equivalence class $[\mat{W}]$. Provided that the metric $\bar{g}_\mat{W}$ in the total space is invariant along the equivalence classes, it defines a metric in the quotient space
\begin{equation*}
	g_{[\mat{W}]}(\xi_{[\mat{W}]},\zeta_{[\mat{W}]}) \triangleq \bar{g}_{\mat{W}}(\bar{\xi}_{\mat{W}},\bar{\zeta}_{\mat{W}}).
\end{equation*}
The horizontal gradient $\overline{\grad{}f(\mat{W})}$ is obtained by projecting the gradient $\grad{}{{f}(\mat{W})}$ in the total space onto the set of horizontal vectors $\bar{\xi}_{\mat{W}}$ at $\mat{W}$.

Natural displacements at $\mat{W}$ in a direction $\xi_{\mat{W}}$ on the manifold are performed by following geodesics (paths of shortest length on the manifold) starting from $\mat{W}$ and tangent to $\xi_{\mat{W}}$. This is performed by means of the exponential mapping
\begin{equation*}\label{eq:exponential-map}
	\mat{W}_{t+1} = \Exp_{\mat{W}_t}(s_t\xi_{\mat{W}_t}),
\end{equation*}
which induces a line-search algorithm along geodesics.

A more general update formula is obtained if we relax the constraint of moving along geodesics. The retraction mapping
\begin{equation*}
	\mat{W}_{t+1} = \R_{\mat{W}_t}(s_t\xi_{\mat{W}_t}),
\end{equation*}
locally approximates the exponential mapping. It provides an attractive alternative to the exponential mapping in the design of optimization algorithms on manifolds, as it reduces the computational complexity of the update while retaining the essential properties that ensure convergence results. When $\xi_{\mat{W}_t}$ coincide with $-\grad{}{f(\mat{W}_t)}$ a gradient descent algorithm on the manifold is obtained. Figure \ref{fig:retraction} pictures a gradient descent update on $\mathcal{W}$.

\section{Linear Regression on the Grassmann Manifold}\label{sec:grassmann}
As a preparatory step to Section \ref{sec:fixed-rank}, we review the online subspace learning \citep{oja92a,crammer06a,warmuth07a} in the present framework. Let $\mathcal{X}=\mathcal{Y}=\mathbb{R}^d$, and consider the linear model
\begin{equation*}
	\hat{\vec{y}} = \mat{U}\mat{U}^T\vec{x},
\end{equation*}
with $\mat{U}\in\Stiefel = \set{\mat{U}\in\mathbb{R}^{d\times r}}{\mat{U}^T\mat{U} = \mat{I}}$, the Stiefel manifold of $r$-dimensional orthonormal bases in $\mathbb{R}^d$. The quadratic loss is then
\begin{equation}\label{eq:subspace-loss}
	f(\mat{U}) = \ell(\hat{\vec{y}},\vec{x}) 
						 = \frac{1}{2}\|\hat{\vec{y}} - \vec{x}\|_2^2
						 = \frac{1}{2}\|\mat{U}\mat{U}^T\vec{x} - \vec{x}\|_2^2.
\end{equation}
Because the cost \eqref{eq:subspace-loss} is invariant by orthogonal transformation $\mat{U}\mapsto \mat{U}\mat{O}$, $\mat{O}\in\OG{r}$, where $\OG{r}=\mathrm{St}(r,r)$ is the orthogonal group, the search space is in fact a set of equivalence classes
\begin{equation*}
	[\mat{U}] = \set{\mat{U}\mat{O}}{\mat{O}\in\OG{r}}.
\end{equation*}
This set is denoted by $\Stiefel/\OG{r}$. It is a {\em quotient representation} of the set of $r$-dimensional subspaces in $\mathbb{R}^d$, that is, the Grassmann manifold $\Grassmann$. The quotient geometries of $\Grassmann$ have been well studied \citep{edelman98a,absil04a}. The metric
\begin{equation*}
	g_{[\mat{U}]}(\xi_{[\mat{U}]},\zeta_{[\mat{U}]}) \triangleq \bar{g}_\mat{U}(\bar{\xi}_\mat{U},\bar{\zeta}_\mat{U}),
\end{equation*}
is induced by the standard metric in $\mathbb{R}^{d\times r}$,
\begin{equation*}
	\bar{g}_{\mat{U}}(\boldsymbol\Delta_1,\boldsymbol\Delta_2) = \trace(\boldsymbol\Delta_1^T\boldsymbol\Delta_2),
\end{equation*}
which is invariant along the fibers, that is, equivalence classes. Tangent vectors $\xi_{[\mat{U}]}$ at $[\mat{U}]$ are represented by horizontal tangent vectors $\bar{\xi}_\mat{U}$ at $\mat{U}$:
\begin{equation*}
	\bar{\xi}_\mat{U} = \Proj_\mat{U} \boldsymbol\Delta = (\mat{I} - \mat{U}\mat{U}^T)  \boldsymbol\Delta,\quad \boldsymbol\Delta\in\mathbb{R}^{d\times r}.
\end{equation*}
Therefore, the gradient admits the simple horizontal representation
\begin{equation}\label{eq:proj-grad}
	\overline{\grad{}{f(\mat{U})}} = \Proj_\mat{U}\ \grad{}{{f}(\mat{U})},
\end{equation}
where $\grad{}{{f}(\mat{U})}$ is defined by the identity
\begin{equation*}
	D{f}(\mat{U})[\boldsymbol\Delta] = \bar{g}_{\mat{U}}(\boldsymbol\Delta,\grad{}{{f}(\mat{U})}).
\end{equation*}

A standard retraction in $\Grassmann$ is the exponential mapping, that induces a line-search along geodesics. The exponential map has the closed-form expression
\begin{equation}\label{eq:geodesic-grassmann}
	\Exp_{\mat{U}}(\bar{\xi}_\mat{U}) = 
		\mat{U}\mat{V}\cos(\boldsymbol\Sigma)\mat{V}^T + \mat{Z}\sin(\boldsymbol\Sigma)\mat{V}^T,
\end{equation}
which is obtained from a singular value decomposition of the horizontal vector $\bar{\xi}_\mat{U} = \mat{Z}\boldsymbol\Sigma \mat{V}^T$. Following \cite{absil04a}, an alternative convenient retraction in $\Grassmann$ is given by
\begin{equation}\label{eq:retraction-grassmann}
	\R_{\mat{U}}(s\bar{\xi}_\mat{U}) = [\mat{U} + s\bar{\xi}_\mat{U}] = \qf(\mat{U} + s\bar{\xi}_\mat{U}),
\end{equation}
where $\qf(\cdot)$ is a function that extracts the orthogonal factor of the QR-decomposition of its argument. A possible advantage of the retraction \eqref{eq:retraction-grassmann} over the retraction \eqref{eq:geodesic-grassmann} is that, in contrast to the SVD computation, the QR decomposition is computed in a fixed number $O(dr^2)$ of arithmetic operations.

With the formulas \eqref{eq:proj-grad} and \eqref{eq:retraction-grassmann} applied to the cost function \eqref{eq:subspace-loss}, the abstract update \eqref{eq:linesearch-manifold} becomes
\begin{equation*}
	\mat{U}_{t+1} = \qf(\mat{U}_t + s_t (\mat{I} - \mat{U}_t\mat{U}_t^T) \vec{x}_t\vec{x}_t^T \mat{U}_t),
\end{equation*}
which is Oja's update for subspace tracking \citep{oja92a}.

\section{Linear Regression on the Cone of Positive Definite Matrices}\label{sec:cone}
The learning of a full-rank positive definite matrix is recast as follows. Let $\mathcal{X}=\mathbb{R}^{d\times d}$ and $\mathcal{Y}=\mathbb{R}$, and consider the model
\begin{equation*}
	\hat{y} = \trace(\mat{W}\mat{X}),
\end{equation*}
with $\mat{W}\in\Cone{d}=\set{\mat{W}\in\mathbb{R}^{d\times d}}{\mat{W} = \mat{W}^T \succ 0}$. Since $\mat{W}$ is symmetric, only the symmetric part of $\mat{X}$ will contribute to the trace. The previous model is thus equivalent to
\begin{equation*}
	\hat{y} = \trace(\mat{W}\Sym(\mat{X})),
\end{equation*}
where $\Sym(\cdot)$ extract the symmetric part of its argument, that is, $\Sym(\mat{B})=(\mat{B}^T+\mat{B})/2$. The quadratic loss is
\begin{equation*}\label{eq:loss-cone-W}
	f(\mat{W}) = \ell(\hat{y},y) = \frac{1}{2} (\trace(\mat{W}\Sym(\mat{X})) - y)^2.
\end{equation*}
The quotient geometries of $\Cone{d}$ are rooted in the matrix factorization
\begin{equation*}
	\mat{W} = \mat{G}\mat{G}^T,\quad \mat{G}\in\GL{d},
\end{equation*}
where $\GL{d}$ is the set of all invertible $d\times d$ matrices. Because the factorization is invariant by rotation, $\mat{G}\mapsto \mat{G}\mat{O}$, $\mat{O}\in\OG{d}$, the search space is once again identified to the quotient
\begin{equation*}
	\Cone{d} \simeq \GL{d}/\OG{d},
\end{equation*}
which represents the set of equivalence classes
\begin{equation*}
	[\mat{G}] = \set{\mat{G}\mat{O}}{\mat{O}\in\OG{d}}.
\end{equation*}
We will equip this quotient with two meaningful Riemannian metrics.

\subsection{A Flat Metric on $\Cone{d}$}\label{eq:flat-cone}
The metric on the quotient $\GL{d}/\OG{d}$:
\begin{equation*}
	g_{[\mat{G}]}(\xi_{[\mat{G}]},\zeta_{[\mat{G}]}) \triangleq \bar{g}_{\mat{G}}(\bar{\xi}_\mat{G},\bar{\zeta}_\mat{G}),
\end{equation*}
is induced by the standard metric in $\mathbb{R}^{d\times d}$,
\begin{equation}
	\bar{g}_{\mat{G}}(\boldsymbol\Delta_1,\boldsymbol\Delta_2) = \trace(\boldsymbol\Delta_1^T\boldsymbol\Delta_2),
\end{equation}
which is invariant by rotation along the set of equivalence classes. As a consequence, it induces a metric $g_{[\mat{G}]}$ on $\Cone{d}$. With this geometry, a tangent vector $\xi_{[\mat{G}]}$ at $[\mat{G}]$ is represented by a horizontal tangent vector $\bar{\xi}_\mat{G}$ at $\mat{G}$ by
\begin{equation*}
	\bar{\xi}_\mat{G} = \Sym(\boldsymbol\Delta)\mat{G},\quad \boldsymbol\Delta\in\mathbb{R}^{d\times d}.
\end{equation*}
The horizontal gradient of
\begin{equation}\label{eq:loss-cone-G}
	f(\mat{G}) = \ell(\hat{y},y) = \frac{1}{2} (\trace(\mat{G}\mat{G}^T\Sym(\mat{X})) - y)^2,
\end{equation}
is the unique horizontal vector $\overline{\grad{}{f(\mat{G})}}$ that satisfies
\begin{equation*}
	Df(\mat{G})[\boldsymbol\Delta] = \bar{g}_{\mat{G}}(\boldsymbol\Delta,\overline{\grad{}{f(\mat{G})}}).
\end{equation*}
Elementary computations yield
\begin{equation*}
	\overline{\grad{}{f(\mat{G})}} = 2(\hat{y} - y) \Sym(\mat{X})\mat{G}.
\end{equation*}
Since the metric is flat, geodesics are straight lines and the exponential mapping is
\begin{equation*}
	\Exp_{\mat{G}}(\bar{\xi}_\mat{G}) = [\mat{G} + \bar{\xi}_\mat{G}] = \mat{G} + \bar{\xi}_\mat{G}.
\end{equation*}
Those formulas applied to the cost \eqref{eq:loss-cone-G} turns the abstract update \eqref{eq:linesearch-manifold} into the simple formula
\begin{equation}\label{eq:online-flat}
	\mat{G}_{t+1} = \mat{G}_t - 2 s_t (\hat{y}_t - y_t)\Sym(\mat{X}_t)\mat{G}_t,
\end{equation}
for an online gradient algorithm and
\begin{equation}\label{eq:batch-flat}
	\mat{G}_{t+1} = \mat{G}_t - 2s_t\frac{1}{n}\sum_{i=1}^n (\hat{y}_i - y_i)\Sym(\mat{X}_i)\mat{G}_t,
\end{equation}
for a batch gradient algorithm.

\subsection{The Affine-Invariant Metric on $\Cone{d}$}\label{sec:affine-invariant-cone}
Because $\Cone{d} \simeq \GL{d}/\OG{d}$ is the quotient of two Lie groups, its (reductive) geometric structure can be further exploited \citep{faraut94a}. Indeed the group $\GL{d}$ has a natural action on $\Cone{d}$ via the transformation $\mat{W}\mapsto\mat{A}\mat{W}\mat{A}^T$ for any $\mat{A}\in\GL{d}$. The affine-invariant metric admits interesting invariance properties to these transformations. To build such an affine-invariant metric, the metric at identity
\begin{equation*}
	g_{\mat{I}}(\xi_{\mat{I}},\zeta_{\mat{I}}) = \trace(\xi_{\mat{I}}\zeta_{\mat{I}}),
\end{equation*}
is extended to the entire space to satisfy the invariance property
\begin{equation*}
	g_{\mat{I}}(\xi_{\mat{I}},\zeta_{\mat{I}})
	= g_\mat{W}(\mat{W}^{\frac{1}{2}}\xi_{\mat{I}}\mat{W}^{\frac{1}{2}},\mat{W}^{\frac{1}{2}}\zeta_{\mat{I}}\mat{W}^{\frac{1}{2}})
	= g_\mat{W}(\xi_{\mat{W}},\zeta_{\mat{W}}).
\end{equation*}
The resulting metric on $\Cone{d}$ is defined by
\begin{equation}\label{eq:affine-invariant-metric-cone}
	g_{\mat{W}}(\xi_{\mat{W}},\zeta_{\mat{W}}) = \trace(\xi_{\mat{W}}\mat{W}^{-1}\zeta_{\mat{W}}\mat{W}^{-1}).
\end{equation}
The affine-invariant geometry of $\Cone{d}$ has been well studied, in particular in the context of information geometry \citep{smith05a}. Indeed, any positive definite matrix $\mat{W}\in\Cone{d}$ can be identified to the multivariate normal distribution of zero mean $\mathcal{N}(0,\mat{W})$, whose probability density is $p(\vec{z};\mat{W}) = \frac{1}{Z}\exp(-\frac{1}{2}\vec{z}^T\mat{W}^{-1}\vec{z})$, where $Z$ is a normalizing constant. Using such a metric allows to endow the space of parameters $\Cone{d}$ with a distance that reflects the proximity of the probability distributions. The Riemannian metric thus distorts the Euclidean distances between positive definite matrices in order to reflect the amount of information between the two associated probability distributions. If $\xi_{\mat{W}}$ is a tangent vector to $\mat{W}\in\Cone{d}$, we have the following approximation for the Kullback-Leibler divergence (up to third order terms)
\begin{equation*}
	D_{KL}(p(\vec{z};\mat{W})||p(\vec{z};\mat{W} + \xi_{\mat{W}}))\approx \frac{1}{2}\; g_{\mat{W}}^{FIM}(\xi_{\mat{W}},\xi_{\mat{W}}) = \frac{1}{2}\; g_\mat{W}(\xi_{\mat{W}},\xi_{\mat{W}}),
\end{equation*}
where $g_{\mat{W}}^{FIM}$ is the well-known Fisher information metric at $\mat{W}$, which coincides with the affine-invariant metric \eqref{eq:affine-invariant-metric-cone} \citep{smith05a}. With this geometry, tangent vectors $\xi_\mat{W}$ are expressed as
\begin{equation*}
	\xi_\mat{W} = \mat{W}^{\frac{1}{2}}\Sym(\boldsymbol\Delta)\mat{W}^{\frac{1}{2}},\quad \boldsymbol\Delta\in\mathbb{R}^{d\times d}.
\end{equation*}
The gradient $\grad{}{f(\mat{W})}$ is given by
\begin{equation*}
	Df(\mat{W})[\boldsymbol\Delta] = g_{\mat{W}}(\boldsymbol\Delta,\grad{}{f(\mat{W})}).
\end{equation*}
Applying this formula to \eqref{eq:loss-cone-W} yields
\begin{equation}\label{eq:gradient-affine-invariant}
	\grad{}{f(\mat{W})} = (\hat{y} - y)\mat{W}\Sym(\mat{X})\mat{W}.
\end{equation}
The exponential mapping has the closed-form expression
\begin{equation}\label{eq:expmap-affine-invariant}
	\Exp_\mat{W}(\xi_\mat{W}) = 
		\mat{W}^{\frac{1}{2}}\exp(\mat{W}^{-\frac{1}{2}}\xi_\mat{W}\mat{W}^{-\frac{1}{2}})\mat{W}^{\frac{1}{2}}.
\end{equation}
Its first-order approximation provides the convenient retraction
\begin{equation}\label{eq:approx-expmap-affine-invariant}
	\R_{\mat{W}}(s\xi_{\mat{W}}) = \mat{W} - s \xi_{\mat{W}}.
\end{equation}
The formulas \eqref{eq:gradient-affine-invariant} and \eqref{eq:expmap-affine-invariant} applied to the cost \eqref{eq:loss-cone-W} turn the abstract update \eqref{eq:linesearch-manifold} into
\begin{equation*}
	\mat{W}_{t+1} =
	 \mat{W}_{t}^{\frac{1}{2}}\exp(-s_t(\hat{y}_t - y_t)
									\mat{W}_{t}^{\frac{1}{2}}\Sym(\mat{X}_{t})\mat{W}_{t}^{\frac{1}{2}})\mat{W}_{t}^{\frac{1}{2}}.
\end{equation*}
With the alternative retraction \eqref{eq:approx-expmap-affine-invariant}, the update becomes
\begin{equation*}
	\mat{W}_{t+1} = \mat{W}_t - s_t(\hat{y}_t - y_t)\mat{W}_{t}\Sym(\mat{X}_{t})\mat{W}_{t},
\end{equation*}
which is the update of \cite{davis07a} based on the LogDet divergence (see Section \ref{sec:closeness}).

\subsection{The Log-Euclidean Metric on $\Cone{d}$}\label{sec:log-euclidean-cone}
For the sake of completeness, we briefly review a third Riemannian geometry of $\Cone{d}$, that exploits the property
\begin{equation*}
	\mat{W} = \exp(\mat{S}),\quad \mat{S} = \mat{S}^T \in\mathbb{R}^{d\times d}.
\end{equation*}
The matrix exponential thus provides a global diffeomorphism between $\Cone{d}$ and the linear space of $d\times d$ symmetric matrices. This geometry is studied in detail in the paper \citep{arsigny06a}. The cost function
\begin{equation*}
	f(\mat{S}) = \ell(\hat{y},y) = \frac{1}{2} (\trace(\exp(\mat{S})\Sym(\mat{X})) - y)^2,
\end{equation*} 
thus defines a cost function in the linear space of symmetric matrices. The gradient of this cost function is given by
\begin{equation*}
	\grad{}{f(\mat{S})} = (\hat{y}_t - y_t) \Sym(\mat{X}_t),
\end{equation*}
and the retraction is 
\begin{equation*}
	\R_{\mat{S}}(s\xi_{\mat{S}}) = \exp(\log\mat{W} + s\xi_{\mat{S}}).
\end{equation*}
The corresponding gradient descent update is
\begin{equation*}
	\mat{W}_{t+1} = \exp(\log\mat{W}_{t} - s_t (\hat{y}_t - y_t) \Sym(\mat{X}_t)),
\end{equation*}
which is the update of \cite{tsuda05a} based on the von Neumann divergence.

\section{Linear Regression on Fixed-Rank Positive Semidefinite Matrices}\label{sec:fixed-rank}
We now present the proposed generalizations to fixed-rank positive semidefinite matrices.

\subsection{Linear Regression with a Flat Geometry}
The generalization of the results of Section \ref{eq:flat-cone} to the set $\FixedRank$ is a straightforward consequence of the factorization
\begin{equation*}
	\mat{W} = \mat{G}\mat{G}^T,\quad \mat{G}\in\mathbb{R}_{*}^{d\times r},
\end{equation*}
where $\mathbb{R}_{*}^{d\times r} = \set{\mat{G}\in\mathbb{R}^{d\times r}}{\det(\mat{G}^T\mat{G}) \neq 0}$. Indeed, the flat quotient geometry of the manifold $\Cone{d}\simeq\GL{d}/\OG{d}$ is generalized to the quotient geometry of $\FixedRank\simeq\mathbb{R}_{*}^{d\times r}/\OG{r}$ by a mere adaptation of matrix dimension, leading to the updates \eqref{eq:online-flat} and \eqref{eq:batch-flat} for matrices $\mat{G}_t\in\mathbb{R}_*^{d\times r}$. The mathematical derivation of these updates is a straight application of the material presented in the paper of \cite{journee09a}, where the quotient geometry of $\FixedRank\simeq\mathbb{R}_{*}^{d\times r}/\OG{r}$ is studied in details. In the next section, we propose an alternative geometry that jointly learns a $r$-dimensional subspace and a full-rank quadratic model in this subspace.

\subsection{Linear Regression with a Polar Geometry}
In contrast to the flat geometry, the affine-invariant geometry of $\Cone{d}\simeq\GL{d}/\OG{d}$ does not generalize directly to $\FixedRank\simeq\mathbb{R}_{*}^{d\times r}/\OG{r}$ because $\mathbb{R}_{*}^{d\times r}$ is not a group. However, a generalization is possible by considering the polar matrix factorization 
\begin{equation*}
	\mat{G} = \mat{U}\mat{R},\quad \mat{U}\in\Stiefel,\ \mat{R}\in\Cone{r}.
\end{equation*}
It is obtained from the singular value decomposition of $\mat{G}=\mat{Z}\boldsymbol\Sigma\mat{V}^T$ as $\mat{U}=\mat{Z}\mat{V}^T$ and $\mat{R}=\mat{V}\boldsymbol\Sigma\mat{V}^T$ \citep{golub06a}. This gives a polar parametrization of $\FixedRank$
\begin{equation*}
	\mat{W} = \mat{U}\mat{R}^2\mat{U}^T.
\end{equation*}
This development leads to the quotient representation
\begin{equation}\label{eq:polar-quotient}
	\FixedRank \simeq (\Stiefel\times\Cone{r})/\OG{r},
\end{equation}
based on the invariance of $\mat{W}$ to the transformation $(\mat{U},\mat{R}^2)\mapsto(\mat{U}\mat{O},\mat{O}^T\mat{R}^2\mat{O})$, $\mat{O}\in\OG{r}$. It thus describes the set of equivalence classes
\begin{equation*}
	[(\mat{U},\mat{R}^2)] = \set{(\mat{U}\mat{O},\mat{O}^T\mat{R}^2\mat{O})}{\mat{O}\in\OG{r}}.
\end{equation*}
The cost function is now given by
\begin{equation}\label{eq:loss-fixed-rank-polar}
	f(\mat{U},\mat{R}^2) = \ell(\hat{y},y) = \frac{1}{2} (\trace(\mat{U}\mat{R}^2\mat{U}^T\Sym(\mat{X})) - y)^2.
\end{equation}
The Riemannian geometry of \eqref{eq:polar-quotient} has been recently studied by \cite{bonnabel09a}. A tangent vector $\xi_\mat{[W]}=(\xi_{\mat{U}},\xi_{\mat{R}^2})_{[\mat{U},\mat{R}^2]}$ at $[(\mat{U},\mat{R}^2)]$ is described by a horizontal tangent vector $\bar{\xi}_\mat{W}=(\bar{\xi}_{\mat{U}},\bar{\xi}_{\mat{R}^2})_{(\mat{U},\mat{R}^2)}$ at $(\mat{U},\mat{R}^2)$ by
\begin{equation*}
	\bar{\xi}_{\mat{U}} = \Proj_{\mat{U}} \boldsymbol\Delta,\ \boldsymbol\Delta\in\mathbb{R}^{d\times r},
		\qquad \bar{\xi}_{\mat{R}^2} = \mat{R}\Sym(\boldsymbol\Psi)\mat{R},\ \boldsymbol\Psi\in\mathbb{R}^{r\times r}.
\end{equation*}
The metric
\begin{eqnarray}\label{eq:metric-polar}
	g_{[\mat{W}]}(\xi_{[\mat{W}]},\zeta_{[\mat{W}]}) &\triangleq& \bar{g}_{\mat{W}}(\bar{\xi}_{\mat{W}},\bar{\zeta}_{\mat{W}})\nonumber\\
		&=& \frac{1}{\lambda}\ \bar{g}_{\mat{U}}(\bar{\xi}_{\mat{U}},\bar{\zeta}_{\mat{U}}) 
						+ \frac{1}{1-\lambda}\ \bar{g}_{\mat{R}^2}(\bar{\xi}_{\mat{R}^2},\bar{\zeta}_{\mat{R}^2}),
\end{eqnarray}
where $\lambda\in (0,1)$, is induced by the metric of $\Stiefel$ and the affine-invariant metric of $\Cone{r}$,
\begin{equation*}
	\bar{g}_{\mat{U}}(\boldsymbol\Delta_1,\boldsymbol\Delta_2) = \trace(\boldsymbol\Delta_1^T\boldsymbol\Delta_2),\qquad
	\bar{g}_{\mat{R}^2}(\boldsymbol\Psi_1,\boldsymbol\Psi_2)   = \trace(\boldsymbol\Psi_1\mat{R}^{-2}\boldsymbol\Psi_2\mat{R}^{-2}).
\end{equation*}
The proposed metric is invariant along the set of equivalence classes and thus induces a quotient structure on $\FixedRank$. Alternative metrics on $\Cone{r}$ can be considered as long as the metric remains invariant along the set of equivalence classes. For instance, the log-Euclidean metric discussed in Section \ref{sec:log-euclidean-cone} would qualify as a valid alternative.

A retraction is provided by distinct retractions on $\mat{U}$ and $\mat{R}^2$,
\begin{eqnarray}
	\label{eq:retraction-U}
	\R_{\mat{U}}(s\bar{\xi}_{\mat{U}})     &=& \qf(\mat{U} + s\bar{\xi}_\mat{U})\\
	\label{eq:retraction-R}
	\R_{\mat{R}^2}(s\bar{\xi}_{\mat{R}^2}) &=& \mat{R}\exp(s\mat{R}^{-1}\bar{\xi}_{\mat{R}^2}\mat{R}^{-1})\mat{R}.
\end{eqnarray}
One should observe that this retraction is not the exponential mapping of $\FixedRank$. This illustrates the interest of considering more general retractions than the exponential mapping. Indeed, as discussed in the paper of \cite{bonnabel09a}, the geodesics (and therefore the exponential mapping) do not appear to have a closed form in the considered geometry. Combining the gradient of \eqref{eq:loss-fixed-rank-polar} with the retractions \eqref{eq:retraction-U} and \eqref{eq:retraction-R} gives
\begin{equation*}
	\begin{split}
			\mat{U}_{t+1} &= 
			\qf\left(\mat{U}_t - 2\lambda s_t (\hat{y}_t - y_t) (\mat{I}-\mat{U}_t\mat{U}_t^T) \Sym(\mat{X}_t)\mat{U}_t \mat{R}^2_t\right),\\
		\mat{R}^2_{t+1} &= 
						\mat{R}_t 
	             \exp\left(-(1-\lambda) s_t (\hat{y}_t - y_t)
	                        	\mat{R}_t \mat{U}_t^T \Sym(\mat{X}_t) \mat{U}_t\mat{R}_t\right)
							\mat{R}_t.
	\end{split}
\end{equation*}
A factorization $\mat{R}_{t+1}\mat{R}_{t+1}^T$ of $\mat{R}^2_{t+1}$ is obtained thanks to the property of matrix exponential, $\exp(\mat{A})^{\frac{1}{2}}=\exp(\frac{1}{2}\mat{A})$. Updating $\mat{R}_{t+1}$ instead of $\mat{R}_{t+1}^2$ is thus more efficient from a computational point of view, since it avoids the computation of a square root a each iteration. 
This yields the online gradient descent algorithm
\begin{equation}\label{eq:online-polar}
	\begin{split}
			\mat{U}_{t+1} &= 
			\qf\left(\mat{U}_t - 2\lambda s_t (\hat{y}_t - y_t) (\mat{I}-\mat{U}_t\mat{U}_t^T) \Sym(\mat{X}_t)\mat{U}_t \mat{R}^2_t\right),\\
			\mat{R}_{t+1} &= 
						\mat{R}_t 
	             \exp\left(-\frac{1}{2}(1-\lambda) s_t (\hat{y}_t - y_t)
	                        	\mat{R}_t \mat{U}_t^T \Sym(\mat{X}_t) \mat{U}_t\mat{R}_t\right),
	\end{split}
\end{equation}
and the batch gradient descent algorithm
\begin{equation}\label{eq:batch-polar}
	\begin{split}
		\mat{U}_{t+1}	&= 
			\qf\left(\mat{U}_t 
				- 2\lambda s_t\frac{1}{n}\sum_{i=1}^n(\hat{y}_i - y_i) (\mat{I}-\mat{U}_t\mat{U}_t^T) \Sym(\mat{X}_i)\mat{U}_t \mat{R}^2_t\right),\\
			\mat{R}_{t+1} &= 
							\mat{R}_t 
              	\exp\left(-\frac{1}{2}(1-\lambda) s_t\frac{1}{n}\sum_{i=1}^n (\hat{y}_i - y_i)
                        	\mat{R}_t \mat{U}_t^T\Sym(\mat{X}_i) \mat{U}_t \mat{R}_t\right).
	\end{split}
\end{equation}

\section{Algorithms}\label{sec:algorithms}
This section documents implementation details of the proposed algorithms. Generic pseudo-codes are provided in Figure \ref{fig:meta-algos} and Table \ref{tab:computational-complexities} summarizes computational complexities.

\begin{figure}
	\footnotesize
	\centering
	\begin{tabular}{cc}
	\toprule
	\begin{tabular}{c|c}
		\multicolumn{2}{c}{\vspace{-4mm}}\\
		\begin{minipage}[c]{0.45\textwidth}
			\begin{center}
	 			Batch regression
			\end{center}
		\end{minipage}
		& 
		\begin{minipage}[c]{0.45\textwidth}
			\begin{center}
				Online regression
			\end{center}
		\end{minipage}\\
		\multicolumn{2}{c}{\vspace{-4mm}}
	\end{tabular}\\
	\midrule
	\begin{tabular}{c|c}
		\multicolumn{2}{c}{\vspace{-2mm}}\\
		\begin{minipage}[c]{0.45\textwidth}
			\begin{algorithmic}[1]
				\ENSURE  $\{(\mat{X}_i,y_i)\}_{i=1}^n$
				\REQUIRE $\mat{G}_0$ or $(\mat{U}_0,\mat{R}_0)$, $\lambda$
				\STATE $t = 0$
				\REPEAT
				\STATE
				\STATE
				\STATE
				\STATE
				\STATE Compute Armijo step $s_A$ from \eqref{eq:armijo}
				\STATE Perform update \eqref{eq:batch-flat} or \eqref{eq:batch-polar} using $s_A$
				\STATE
				\STATE
				\STATE $t = t + 1$
				\UNTIL{stopping criterion \eqref{eq:stop-criterion} is satisfied}
				\RETURN $\mat{G}_t$
			\end{algorithmic}
		\end{minipage}
		& 
		\begin{minipage}[c]{0.45\textwidth}
			\begin{algorithmic}[1]
				\ENSURE  $\{(\mat{X}_t,y_t)\}_{t\geq 1}$
				\REQUIRE $\mat{G}_0$ or $(\mat{U}_0,\mat{R}_0)$, $\lambda$, $p$, $s$, $t_0$, $T$
				\STATE $t = 0, \mathtt{count} = p$
				\WHILE{$t \leq T$}
				\IF{$\mathtt{count} > 0$}
				\STATE Accumulate gradient
				\STATE $\mathtt{count} = \mathtt{count} - 1$
				\ELSE
				\STATE Compute step size $s_t$ according to \eqref{eq:step}
				\STATE Perform update \eqref{eq:online-flat} or \eqref{eq:online-polar} using $s_t$
				\STATE $\mathtt{count} = p$
				\ENDIF
				\STATE $t = t + 1$
				\ENDWHILE
				\RETURN $\mat{G}_T$
			\end{algorithmic}
			\end{minipage}\\
			\multicolumn{2}{c}{\vspace{-2mm}}
		\end{tabular}\\
	\bottomrule
	\end{tabular}
	\caption{Pseudo-codes for the proposed batch and online algorithms.}
	\label{fig:meta-algos}
\end{figure}

\begin{table}[!ht]
	\centering
	\small
	\begin{tabular}{llllll}
		\toprule
		Data type & Input space & Batch flat \eqref{eq:batch-flat} & Batch polar \eqref{eq:batch-polar} & Online flat \eqref{eq:online-flat} & Online polar \eqref{eq:online-polar}\\
		\midrule
		$\mat{X}$          & $\mathbb{R}^{d\times d}$ & $O(d^2rn)$ & $O(d^2r^2n)$ & $O(d^2rp)$ & $O(d^2r^2p)$\\
		$\vec{x}\vec{x}^T$ & $\mathbb{R}^{d}$         & $O(drn)$   & $O(dr^2n)$   & $O(drp)$   & $O(dr^2p)$\\
		\bottomrule
	\end{tabular}
	\caption{Computational costs of the proposed algorithms.}
	\label{tab:computational-complexities}
\end{table}
\subsection{From Subspace Learning to Distance Learning}
The update expressions \eqref{eq:batch-polar} and \eqref{eq:online-polar} show that $\lambda$, the tuning parameter of the Riemannian metric \eqref{eq:metric-polar}, acts as a weighting factor on the search direction. A proper tuning of this parameter allows us to place more emphasis either on the learning of the subspace $\mat{U}$ or on the distance in that subspace $\mat{R}^2$. In the case $\lambda=1$, the algorithm only performs subspace learning. Conversely, in the case $\lambda=0$, the algorithm learns a distance for a fixed range space (see Section \ref{sec:closeness}). Intermediate values of $\lambda$ continuously interpolate between the subspace learning problem and the distance learning problem at fixed range space.

A proper tuning of $\lambda$ is of interest when a good estimate of the subspace is available (for instance a subspace given by a proper dimension reduction technique) or when too few observations are available to jointly estimate the subspace and the distance within that subspace. In the latter case, one has the choice to favor either subspace or distance learning. 

Experimental results of Section \ref{sec:experiments} recommend the value $\lambda=0.5$ as the default setting. 

\subsection{Invariance Properties}
A nice property of the proposed algorithms is their invariance with respect to rotations $\mat{W}\mapsto \mat{O}^T \mat{W} \mat{O}$, $\forall \mat{O}\in\OG{d}$. This invariance comes from the fact that the chosen metrics are invariant to rotations.  A practical consequence is that a rotation of the input matrix $\mat{X}\mapsto \mat{O}\mat{X}\mat{O}^T$ (for instance a whitening transformation of the vectors $\vec{x}\mapsto \mat{O}\vec{x}$ if $\mat{X}=\vec{x}\vec{x}^T$) will not affect the behavior of the algorithms.

Besides being invariant to rotations, algorithms \eqref{eq:online-polar} and \eqref{eq:batch-polar} are invariant with respect to scalings $\mat{W}\mapsto\mu \mat{W}$ with $\mu > 0$. Consequently, a scaling of the input data $(\mat{X},y) \mapsto (\mu \mat{X}, \mu y)$, such as a change of units, will not affect the behavior of these algorithms.

\subsection{Mini-Batch Extension of Online Algorithms}
We consider a mini-batch extension of stochastic gradient algorithms. It consists in performing each gradient step with respect to $p\geq 1$ examples at a time instead of a single one. This is a classical speedup and stabilization heuristic for stochastic gradient algorithms. In the particular case $p=1$, one recovers plain stochastic gradient descent. Given $p$ samples $(\mat{X}_{t,1},y_{t,1}),...,(\mat{X}_{t,p},y_{t,p})$, received at time $t$, the abstract update \eqref{eq:linesearch-manifold} becomes
\begin{equation*}
	\mat{W}_{t+1} = \R_{\mat{W}_t}\left(-s_t\ \frac{1}{p}\sum_{i=1}^{p}\grad{}{\ell(\hat{y}_{t,i},y_{t,i})}\right).
\end{equation*}

\subsection{Strategies for Choosing the Step Size}
We here present strategies for choosing the step size in both the batch and online cases.
\subsubsection{Batch Algorithms}
For batch algorithms, classical backtracking methods exist \citep[see][]{nocedal06a}. In this paper, we use the Armijo step $s_A$ defined at each iteration by the condition
\begin{equation}\label{eq:armijo}
f(\R_{\mat{W}_t}(-s_A\ \grad{}{f(\mat{W}_t)})) 
  \leq f(\mat{W}_t) + c \|\grad{}{f(\mat{W}_t)}\|^2_{\mat{W}_t},
\end{equation}
where $\mat{W}_t\in S_+(r,d)$ is the current iterate, $c\in(0,1)$, $f$ is the empirical cost \eqref{eq:empirical-cost} and $\R_{\mat{W}}$ is the chosen retraction. In this paper, we choose the particular value $c=0.5$ and repetitively divide by $2$ a specified maximum step size $s_{max}$ until condition \eqref{eq:armijo} is satisfied for the considered iteration. In order to reduce the dependence on $s_{max}$ in a particular problem, it is chosen inversely proportional to the norm of the gradient at each iteration,
\begin{equation*}
	s_{max} = \frac{s_{0}}{\|\grad{}{f(\mat{W}_t)}\|_{\mat{W}_t}}.
\end{equation*}
A typical value of $s_0=100$ showed satisfactory results for all the considered problems.  

\subsubsection{Online Algorithms}
For online algorithms, the choice of the step size is more involved. In this paper, 
the step size schedule $s_t$ is chosen as
\begin{equation}\label{eq:step}
	s_t = \frac{s}{\hat{\mu}_{grad}}\times\frac{nt_0}{nt_0 + t},
\end{equation}
where $s > 0$, $n$ is the number of considered learning samples, $\hat{\mu}_{grad}$ is an estimate of the average gradient norm $\|\grad{}{f(\mat{W}_0)}\|_{\mat{W}_0}$, and $t_0 > 0$ controls the annealing rate of $s_t$. During a pre-training phase of our online algorithms, we select a small subset of learning samples and try the values $2^k$ with $k=-3,...,3$ for both $s$ and $t_0$. The values of $s$ and $t_0$ that provide the best decay of the cost function are selected to process the complete set of learning samples.

\subsection{Stopping Criterion}
Batch algorithms are stopped when the value or the relative change of the empirical cost $f$ is small enough, or when the relative change in the parameter variation is small enough,
\begin{equation}\label{eq:stop-criterion}
	f(\mat{W}_{t+1})  \leq \epsilon_{tol},
		\quad \text{or} \quad
	\frac{f(\mat{W}_{t+1}) - f(\mat{W}_{t})}{f(\mat{W}_{t})} \leq \epsilon_{tol},
		\quad \text{or} \quad
	\frac{\|\mat{G}_{t+1} - \mat{G}_{t}\|_F}{\|\mat{G}_{t}\|_F} \leq \epsilon_{tol}.
\end{equation}
We found $\epsilon_{tol}=10^{-5}$ to be a good trade-off between accuracy and convergence time. 

Online algorithms are run for a fixed number of epochs (number of passes through the set of learning samples). Typically, a few epochs are sufficient to attain satisfactory results.

\subsection{Convergence}\label{sec:convergence}
Gradient descent algorithms on matrix manifolds share the well-characterized convergence properties of their analog in $\mathbb{R}^d$. Batch algorithms converge linearly to a local minimum of the empirical cost that depends on the initial condition. Online algorithms converge asymptotically to a local minimum of the expected loss. They intrinsically have a much slower convergence rate than batch algorithms, but they generally decrease faster the expected loss in the large-scale regime~\citep{bottou08a}. The main idea is that, given a training set of samples, an inaccurate solution may indeed have the same or a lower expected cost than a well-optimized one.

When learning a matrix $\mat{W}\in\Cone{d}$, the problem is convex and the proposed algorithms converge toward a global minimum of the cost function, regardless of the initial condition. When learning a low-rank matrix $\mat{W}\in \FixedRank$, with $r < d$, the proposed algorithms converge to a local minimum of the cost function. This is not the case for heuristic methods proposed in the literature, which first reduce the dimensionality of the data before fitting a full-rank model on the reduced data \citep{davis08a,weinberger09a}.

For batch algorithms, the local convergence results follow from the convergence theory of line-search algorithms on Riemannian manifolds \citep[see, for example,][]{absil08a}.

For online algorithms, one can prove that the algorithm based on the flat geometry enjoys almost sure asymptotic convergence to a local minimum of the expected cost. In that case, the parameter $\mat{G}$ belongs to an Euclidean space and the convergence results presented by \cite{bottou98a} apply (see Appendix \ref{sec:proof-sgd} for the main ideas of the proof). 

In contrast, when the polar parametrization is used, the convergence results presented by \cite{bottou98a} do not apply directly because of the quotient nature of the search space. Because the extension would require technical arguments beyond the scope of the present paper, we refrain from stating a formal convergence result for the online algorithm based on the polar geometry, even though the result is quite plausible.

Due to the nonconvex nature of the considered rank-constrained problems, the convergence results are only local and little can be presently said about the global convergence of the algorithms. A global analysis of the critical points of the cost functions studied in the present paper is nevertheless not hopeless and could be facilitated by the considered low-rank parametrizations. For instance, global convergence properties have been established for PCA algorithms from an explicit analysis of the critical points \citep{chen98a}. Also, recent results suggest good global convergence properties for closely related rank minimization problems \citep{recht10a}. Experimental results suggest the same conclusions for the algorithms considered in this paper, which means that further research on global convergence results is certainly deserved.

\section{Discussion}\label{sec:discussion}
This section presents connections with existing works and extensions of the regression model.

\subsection{Closeness-Based Approaches}\label{sec:closeness}
A standard derivation of learning algorithms is as follows \citep{kivinen97a}. The (online) update at time $t$ is viewed as an (approximate) solution of
\begin{equation}\label{eq:closeness-interpretation}
	\mat{W}_{t+1} = \argmin_{\mat{W}\in \mathcal{W}}\ D(\mat{W},\mat{W}_t)\ +\ s_t\ \ell(\hat{y},y_t),
\end{equation}
where $D$ is a well-chosen measure of closeness between elements of $\mathcal{W}$ and $s_t$ is a trade-off parameter that controls the balance between the conservative term $D(\mat{W},\mat{W}_t)$ and the innovation (or data fitting) term $\ell(\hat{y},y_t)$. One solves \eqref{eq:closeness-interpretation} by solving the algebraic equation
\begin{equation}\label{eq:closeness-optim-condition}
	\grad{}{\ D(\mat{W},\mat{W}_t)} = - s_t\ \grad{}{\ \ell(\hat{y}_{t+1},y_t)},
\end{equation}
which is a first-order (necessary) optimality condition. If the search space $\mathcal{W}$ is a Riemannian manifold and if the closeness measure $D(\mat{W},\mat{W}_t)$ is the Riemannian distance, the solution of \eqref{eq:closeness-optim-condition} is
\begin{equation*}
	\mat{W}_{t+1} = \Exp_{\mat{W}_t}(-s_t\ \grad{}{\ \ell(\hat{y}_{t+1},y_t)}) .
\end{equation*}
Because $\hat{y}_{t+1}$ must be evaluated in $\mat{W}_{t+1}$, this update equation is implicit. However,  $\hat{y}_{t+1}$ is generally replaced by $\hat{y}_t$ (which is equal to $\hat{y}_{t+1}$ up to first order terms in $s_t$), which gives the update \eqref{eq:linesearch-manifold} where the exponential mapping is chosen as a retraction.

Bregman divergences have been popular closeness measures for $D(\mat{W},\mat{W}_t)$ because they render the optimization of \eqref{eq:closeness-interpretation} convex. Bregman divergences on the cone of positive definite matrices include the von Neumann divergence
\begin{equation*}
D_{vN}(\mat{W},\mat{W}_t) = \trace(\mat{W} \log \mat{W} - \mat{W} \log \mat{W}_t - \mat{W} + \mat{W}_t),
\end{equation*} 
and the LogDet divergence
\begin{equation*}
D_{ld}(\mat{W},\mat{W}_t) = \trace(\mat{W}\mat{W}_t^{-1}) - \log\det(\mat{W}\mat{W}_t^{-1}) - d.
\end{equation*}
We have shown in Section \ref{sec:cone} that the resulting updates can be interpreted as line-search updates for the log-Euclidean metric and the affine-invariant metric of $\Cone{d}$ and for specific choices of the retraction mapping.

Likewise, the algorithm \eqref{eq:online-flat} can be recast in the framework \eqref{eq:closeness-interpretation} by considering the closeness
\begin{equation*}
	D_{flat}(\mat{W},\mat{W}_t) = \|\mat{G} - \mat{G}_t\|_F^2,
\end{equation*}
where $\mat{W}=\mat{G}\mat{G}^T$ and $\mat{W}_t=\mat{G}_t\mat{G}_t^T$. Algorithm \eqref{eq:online-polar} can be recast in the framework \eqref{eq:closeness-interpretation} by considering the closeness
\begin{equation*}
	D_{pol}(\mat{W},\mat{W}_t) = 
		{\lambda} \ \sum_{i=1}^{r} \theta_i^2\ +\ 
			(1-\lambda) \ \|\log \mat{R}_t^{-1} \mat{R}^2 \mat{R}_t^{-1}\|_F^2.
\end{equation*}
where the $\theta_i$'s are the principal angles between the subspaces spanned by $\mat{W}$ and $\mat{W}_t$ \citep{golub06a}, and the second term is the affine-invariant distance of $\Cone{d}$ between matrices $\mat{R}^2$ and $\mat{R}^2_t$ involved in the polar representation of $\mat{W}$ and $\mat{W}_t$.

Obviously, these closeness measures are no longer convex due to the rank constraint. However they reduce to the popular divergences in the full-rank case, up to second order terms. In particular, when $\lambda = 1$, the subspace is fixed and one recovers the setup of learning low-rank matrices of a fixed range space \citep{kulis09a}. Thus, the algorithms introduced in the present paper can be viewed as generalizations of the ones presented in the paper of \cite{kulis09a}, where the issue of adapting the range space is presented as an open research question. Each of the proposed algorithms provides an efficient workaround for this problem at the expense of the (potential) introduction of local minima.

\subsection{Handling Inequalities}\label{sec:inequalities}
Inequalities $\hat{y} \leq y$ or $\hat{y} \geq y$ can be considered by treating them as equalities when they are not satisfied. This is equivalent to the minimization of the continuously differentiable cost function
\begin{equation*}
f(\mat{W}) = \ell(\hat{y},y) = \frac{1}{2}\max(0,\rho(\hat{y} - y))^2,
\end{equation*} 
where $\rho=+1$ if $\hat{y}\leq y$ is required and $\rho=-1$ if $\hat{y}\geq y$ is required.

\subsection{Kernelizing the Regression Model}
In this paper, we have not considered the kernelized model 
\begin{equation*}
	\hat{y}=\trace(\mat{W}\phi(\vec{x})\phi(\vec{x})^T),
\end{equation*}
whose predictions can be extended to new input data $\phi(\vec{x})$ in the feature space $\mathcal{F}$ induced by the nonlinear mapping $\phi:\vec{x}\in\mathcal{X}\mapsto\phi(\vec{x})\in\mathcal{F}$. This is potentially a useful extension of the regression model that could be investigated in the light of recent theoretical results in this area \citep[for example][]{chatpatanasiri10a,jain10a}.

\subsection{Connection with Multidimensional Scaling Algorithms}
Given a set of $m$ dissimilarity measures $\mathcal{D}=\{\delta_{ij}\}^m$ between $n$ data objects, multidimensional scaling algorithms search for a $r$-dimensional embedding of the data objects into an Euclidean space representation $\mat{G}\in\mathbb{R}^{n\times r}$~\citep{cox01a,borg05a}. Each row $\vec{g}$ of $\mat{G}$ is the coordinates of a data object in a Euclidean space of dimension $r$. 

Multidimensional scaling algorithms based on gradient descent are equivalent to algorithms~\eqref{eq:online-flat} and~\eqref{eq:batch-flat} when $\mat{X}=(\vec{e}_i-\vec{e}_j)(\vec{e}_i-\vec{e}_j)^T$, where $\vec{e}_i$ is the $i$-th unit vector (see Section~\ref{sec:kernel-learning}), and when the multidimensional scaling reduction criterion is the SSTRESS
\begin{equation*}
SSTRESS(\mat{G}) = \sum_{(i,j)\in\mathcal{D}} (\|\vec{g}_{i} - \vec{g}_{j}\|^2_2 - \delta_{ij})^2.
\end{equation*}
Vectors $\vec{g}_{i}$ and $\vec{g}_{j}$ are the $i$-th and $j$-th rows of matrix $\mat{G}$. Gradient descent is a popular technique in the context of multidimensional scaling algorithms. A stochastic gradient descent approach for minimizing the SSTRESS has also been proposed by \cite{matsuda01a}. A potential area of future work is the application of the proposed online algorithm~\eqref{eq:online-flat} for adapting a batch solution to slight modifications of the dissimilarities over time. This approach has a much smaller computational cost than recomputing the offline solution at every time step. It further allows to keep the coordinate representation coherent over time since the solution do not brutally jumps from a local minimum to another.

\section{Applications}\label{sec:applications}
The choice of an appropriate distance measure is a central issue for many distance-based classification and clustering algorithms such as nearest neighbor classifiers, support vector machines or k-means. Because this choice is highly problem-dependent, numerous methods have been proposed to learn a distance function directly from data. In this section, we present two important distance learning applications that are compatible with the considered regression model and review some relevant literature on the subject.

\subsection{Kernel Learning}\label{sec:kernel-learning}
In kernel-based methods \citep{shawe-taylor04a}, the data samples $\vec{x}_1,...,\vec{x}_n$ are first transformed by a nonlinear mapping $\phi:\vec{x}\in\mathcal{X}\mapsto\phi(\vec{x})\in\mathcal{F}$, where $\mathcal{F}$ is a new feature space that is expected to facilitate pattern detection into the data. 

The kernel function is then defined as the dot product between any two samples in $\mathcal{F}$,
\begin{equation*}\label{eq:kernel_function}
	\kappa(\vec{x}_i,\vec{x}_j) = \phi(\vec{x}_i) \cdot \phi(\vec{x}_j).
\end{equation*}
In practice, the kernel function is represented by a positive semidefinite matrix $\mat{K}\in\mathbb{R}^{n\times n}$ whose entries are defined as $\mat{K}_{ij}=\phi(\vec{x}_i) \cdot \phi(\vec{x}_j)$. This inner product information is used solely to compute the relevant quantities needed by the algorithms based on the kernel. For instance, a distance is implicitly defined by any kernel function as the Euclidean distance between the samples in the new feature space
\begin{equation*}
d_\phi(\vec{x}_i,\vec{x}_j) = \|\phi(\vec{x}_i)-\phi(\vec{x}_j)\|^2=\kappa(\vec{x}_i,\vec{x}_i)+\kappa(\vec{x}_j,\vec{x}_j)-2 \kappa(\vec{x}_i,\vec{x}_j),
\end{equation*}
which can be evaluated using only the elements of the kernel matrix by the formula
\begin{equation*}
d_\phi(\vec{x}_i,\vec{x}_j) = \mat{K}_{ii} + \mat{K}_{jj} - 2 \mat{K}_{ij} 
                            = \trace\left(\mat{K}(\vec{e}_i-\vec{e}_j)(\vec{e}_i-\vec{e}_j)^T\right),
\end{equation*}
which fits into the considered regression model. 

Learning a kernel consists in computing the kernel (or Gram) matrix from scratch or improving a existing kernel matrix based on side-information (in a semi-supervised setting for instance). Data samples and class labels are generally exploited by means of equality or inequality constraints involving pairwise distances or inner products.

Most of the numerous kernel learning algorithms that have been proposed work in the so-called transductive setting, that is, it is not possible to generalize the learned kernel function to new data samples \citep{kwok03a,lanckriet04a,tsuda05a,zhuang09a,kulis09a}. In that setting, the total number of considered samples is known in advance and determines the size of the learned matrix. Recently, algorithms have been proposed to learn a kernel function that can be extended to new points \citep{chatpatanasiri10a,jain10a}. In this paper, we only consider the kernel learning problem in the transductive setting.

When low-rank matrices are considered, kernel learning algorithms can be regarded as dimensionality reduction methods.  Very popular unsupervised algorithms in that context are kernel principal component analysis~\citep{scholkopf98a} and multidimensional scaling \citep{cox01a,borg05a}. Other kernel learning techniques include the maximum variance unfolding algorithm~\citep{weinberger04a} and its semi-supervised version \citep{song08a}, and the kernel spectral regression framework \citep{cai07a} which encompasses many reduction criterion (for example, linear discriminant analysis (LDA), locality preserving projection (LPP), neighborhood preserving embedding (NPE)). See the survey of \cite{yang06a} for a more complete state-of-the-art in this area. 

Since our algorithms are able to compute a low-rank kernel matrix from data, they can be used for unsupervised or semi-supervised dimensionality reduction, depending whether or not the class labels are exploited through the imposed constraints. 

\subsection{Mahalanobis Distance Learning}

Mahalanobis distances generalize the usual Euclidean distance as it allows to transform the data with an arbitrary rotation and scaling before computing the distance. Let $\vec{x}_i,\vec{x}_j\in\mathbb{R}^d$ be two data samples, the (squared) Mahalanobis distance between these two samples is parametrized by a positive definite matrix $\mat{A}\in\mathbb{R}^{d\times d}$ and writes as
\begin{equation}\label{eq:mahalanobis}
d_{\mat{A}}(\vec{x}_i,\vec{x}_j) = (\vec{x}_i - \vec{x}_j)^T \mat{A}\ (\vec{x}_i - \vec{x}_j).
\end{equation}
In the particular case of $\mat{A}$ being equal to the identity matrix, the standard Euclidean distance is obtained. A frequently used matrix is $\mat{A}=\boldsymbol\Sigma^{-1}$, the inverse of the sample covariance matrix. For centered data features, computing this Mahalanobis distance is equivalent to perform a whitening of the data before computing the Euclidean distance. 

For low-rank Mahalanobis matrices, computing the distance is equivalent to first perform a linear data reduction step before computing the Euclidean distance on the reduced data.\footnote{In the low-rank case, one should rigorously refer to \eqref{eq:mahalanobis} as a pseudo-distance. Indeed, one has $d_{\mat{A}}(\vec{x}_i,\vec{x}_j)=0$ with $\vec{x}_i\neq\vec{x}_j$ whenever $(\vec{x}_i-\vec{x}_j)$ lies in the null space of $\mat{A}$.} Learning a low-rank Mahalanobis matrix can thus be seen as learning a linear projector that is used for dimension reduction.

In contrast to kernel functions, Mahalanobis distances easily generalize to new data samples since the sole knowledge of $\mat{A}$ determines the distance function.

In recent years, Mahalanobis distance learning algorithms have been the subject of many contributions that cannot be all enumerated here. We review a few of them, most relevant for the present paper. The first proposed methods have been based on successive projections onto a set of large margin constraints~\citep{xing02a,shalev-shwartz04a}. The method proposed by~\cite{globerson05a} seeks a Mahalanobis matrix that maximizes the between classes distance while forcing to zero the within classes distance. A simpler objective is pursued by the algorithms that optimize the Mahalanobis distance for the specific $k$-nearest neighbor classifier \citep{goldberger05a,torresani07a,weinberger09a}. Bregman projection based methods minimize a particular Bregman divergence under distance constraints. Both batch~\citep{davis07a} and online~\citep{jain08a} formulations have been proposed for learning full-rank matrices. Low-rank matrices have also been considered with Bregman divergences but only when the range space of the matrix is fixed in the first place~\citep{davis08a,kulis09a}. 

\section{Experiments}\label{sec:experiments}
\begin{table}[!ht]
	\small
	\begin{tabular}{lrrrl}
		\toprule
		Data Set & Samples & Features & Classes & Reference\\
		\midrule
		GyrB           & 52    &   -   &  3 & \cite{tsuda05a}\\
		Digits         & 300   &  16   &  3 & \cite{ucidb07a}\\
		Wine           & 178   &  13   & 13 & \cite{ucidb07a}\\
		Ionosphere     & 351   &  33   &  2 & \cite{ucidb07a}\\
		Balance Scale  & 625   &   4   &  3 & \cite{ucidb07a}\\
		Iris           & 150   &   4   &  3 & \cite{ucidb07a}\\
		Soybean        & 532   &  35   & 17 & \cite{ucidb07a}\\
		USPS           & 2,007 & 256   & 10 & \cite{lecun90c}\\
		Isolet         & 7,797 & 617   & 26 & \cite{ucidb07a}\\
		Prostate       &  322  & 15,154 & 2 & \cite{petricoin02a}\\		
		\bottomrule
	\end{tabular}
	\centering
	\caption{Considered datasets}
	\label{tab:datasets}
\end{table}

In this section, we illustrate the potential of the proposed algorithms on several benchmark experiments. First, the proposed algorithms are evaluated on toy data. Then, they are compared to state-of-the-art kernel learning and Mahalanobis distance learning algorithms on real datasets. Overall, the experiments support that a joint estimation of a subspace and low-dimensional distance in that subspace is a major advantage of the proposed algorithms over methods that estimate the matrix for a subspace that is fixed beforehand. 

Table~\ref{tab:datasets} summarizes the different datasets that have been considered. As a normalization step, the data features are centered and rescaled to unit standard deviation. 

The implementation of the proposed algorithms,\footnote{The source code is available from \texttt{http://www.montefiore.ulg.ac.be/\textasciitilde meyer}} as well as the experiments of this paper are performed with Matlab. The implementations of algorithms MVU,\footnote{\texttt{http://www.cse.wustl.edu/\textasciitilde kilian/Downloads/MVU.html}} KSR,\footnote{\texttt{http://www.cs.uiuc.edu/homes/dengcai2/SR/}} LMNN,\footnote{\texttt{http://www.cse.wustl.edu/\textasciitilde kilian/Downloads/LMNN.html}} and ITML,\footnote{\texttt{http://www.cs.utexas.edu/users/pjain/itml/}} have been rendered publicly available by \cite{weinberger04a}, \cite{cai07a}, \cite{weinberger09a} and \cite{davis07a} respectively. Algorithms POLA \citep{shalev-shwartz04a}, LogDet-KL \citep{kulis09a} and LEGO \citep{jain08a} have been implemented on our own.

\subsection{Toy Data}
In this section, the proposed algorithms are evaluated on synthetic regression problems. The data vectors $\vec{x}_1,...,\vec{x}_n\in\mathbb{R}^d$ and the target matrix $\mat{W}^*\in\FixedRank$ are generated with entries drawn from a standard Gaussian distribution $\mathcal{N}(0,1)$. Observations follow
\begin{equation}\label{eq:obsmodel}
y_i = (\vec{x}_i^T \mat{W}^{*} \vec{x}_i) (1 + \nu_i), \quad i = 1,...,n, 
\end{equation}
where $\nu_i$ is drawn from $\mathcal{N}(0,0.01)$. A multiplicative noise model is preferred over an additive one to easily control that observations remain nonnegative after the superposition of noise. 

\subsubsection{Learning the Subspace vs. Fixing the Subspace Up Front}
\begin{figure}[!hp]
	\centering
	\begin{tabular}{c}
		\includegraphics[width=0.60\textwidth]{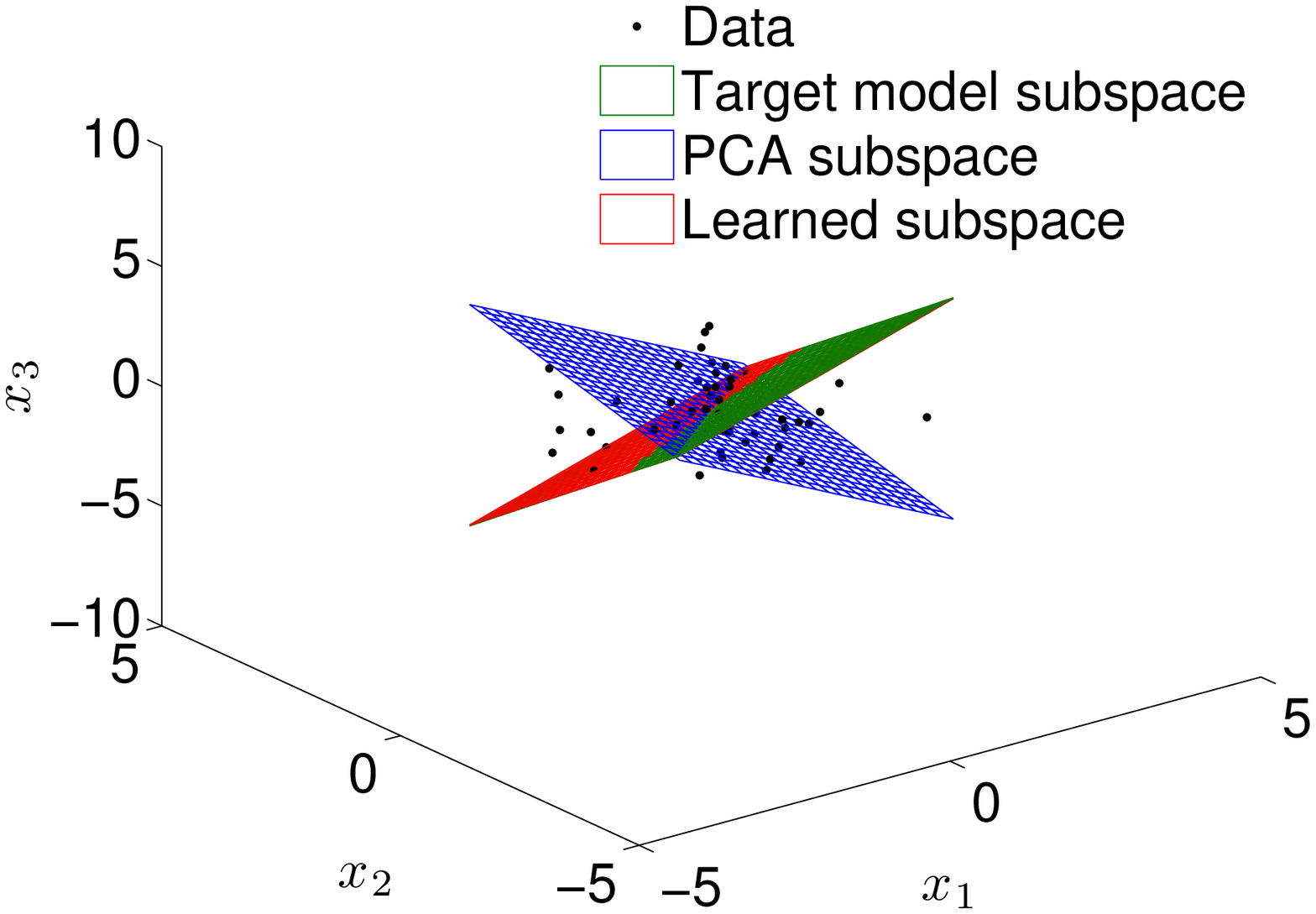}
	\end{tabular}
	\begin{tabular}{cc}
		\includegraphics[width=0.48\textwidth]{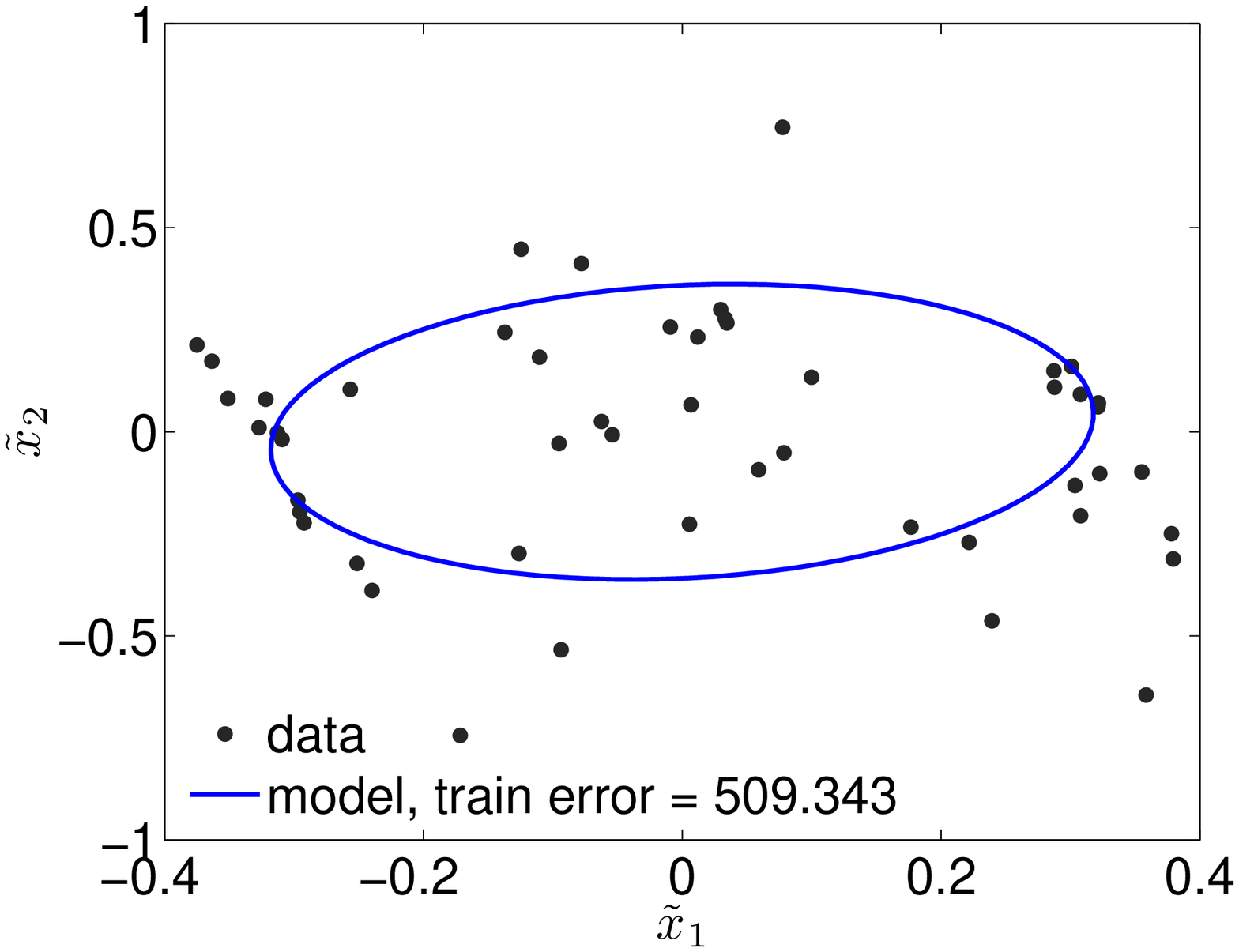} &
	\includegraphics[width=0.48\textwidth]{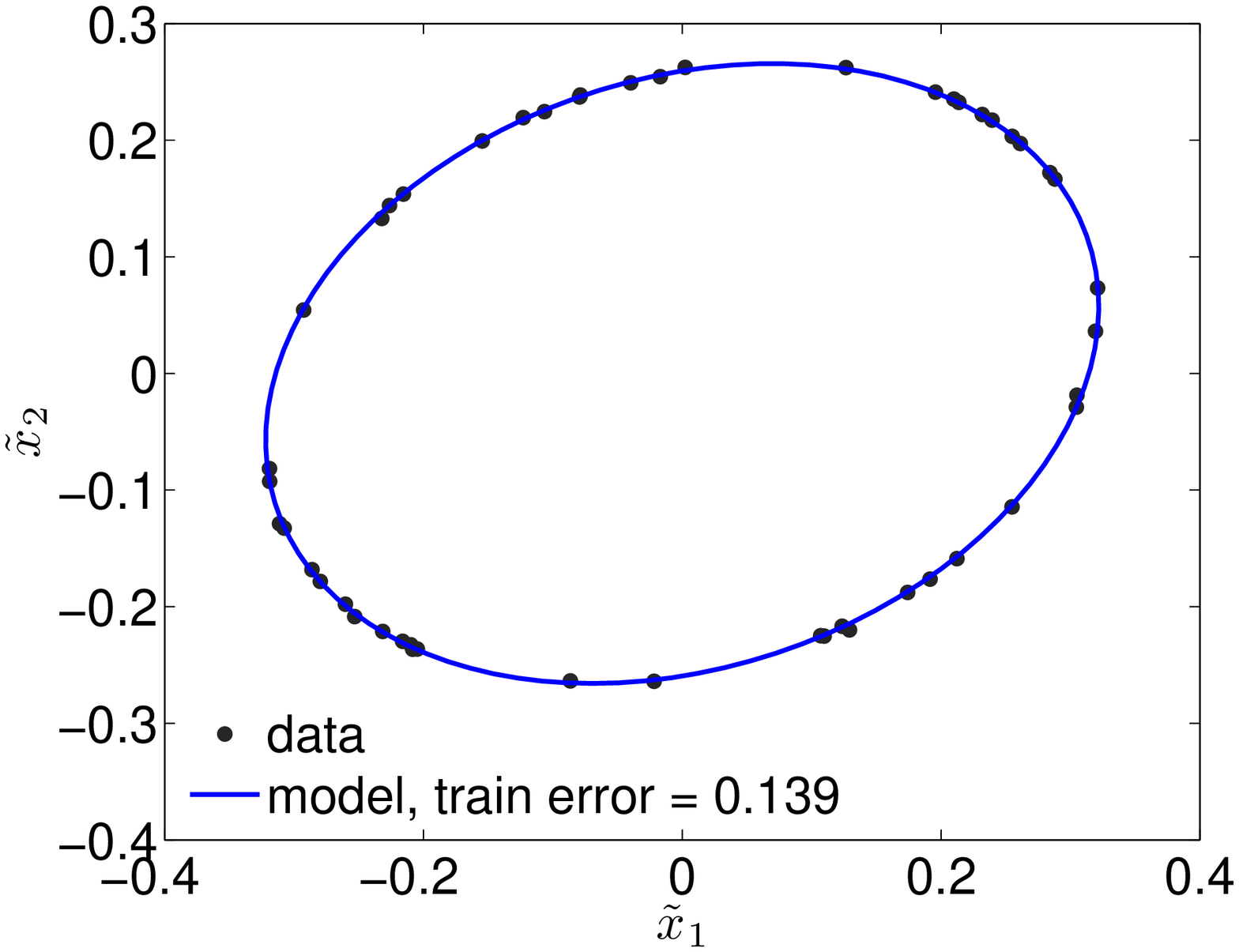}\\
	\end{tabular}
	\caption{Learning vs fixing the subspace. {\bf Top}: the learned subspace is very different from the subspace computed from a classical heuristic. {\bf Bottom left}: fit after projection of the data onto a subspace fixed up front. {\bf Bottom right}: fit obtained with a join estimation of the subspace and a distance within that subspace.}
	\label{fig:illustr-example}
\end{figure}
As an illustrative example, we show the difference between two approaches for fitting the data to observations when a target model $\mat{W}^*\in S_+(3,3)$ is approximated with a parameter $\mat{W}\in S_+(2,3)$. 

A naive approach to tackle that problem is to first project the data $\vec{x}_i\in\mathbb{R}^3$ on a subspace of reduced dimension and then to compute a full-rank model based on the projected data. Recent methods compute that subspace of reduced dimension using principal component analysis~\citep{davis08a,weinberger09a}, that is, a subspace that captures a maximal amount of variance in the data. However, in general, there is no reason why the subspace spanned by the top principal components should coincide with the subspace that is defined by the target model. Therefore, a more appropriate approach consists in learning jointly the subspace and a distance in that subspace that best fits the data to observations within that subspace. 

To compare the two approaches, we generate a set of learning samples $\{(\vec{x}_i,y_i)\}_{i=1}^{200}$, with $\vec{x}_i\in\mathbb{R}^3$ and $y_i$ that follows \eqref{eq:obsmodel}. The target model is
\begin{equation*}
	\mat{W}^* = \tilde{\mat{U}}\boldsymbol\Lambda\tilde{\mat{U}}^T
\end{equation*}
where $\tilde{\mat{U}}$ is a random $3\times 3$ orthogonal matrix and $\boldsymbol\Lambda$ is a diagonal matrix with two dominant values $\Lambda_{11},\Lambda_{22} \gg \Lambda_{33} > 0$ (for this specific example, $\Lambda_{11} = 4, \Lambda_{22} = 3$ and $\Lambda_{33} = 0.01$). Observations $y_i$ are thus nearly generated by a rank-2 model, such that $\mat{W}^{*}$ should be well approximated with a matrix $\mat{W}\in S_+(2,3)$ that minimizes the train error.

Results are presented in Figure~\ref{fig:illustr-example}. The top plot shows that the learned subspace (which identifies with the target subspace) is indeed very different from the subspace spanned by the top two principal components. Moreover, the bottom plots clearly demonstrate that the fit is much better when the subspace and the distance in that subspace are learned jointly. The difference is also significant in terms of the train error. This simple example shows that heuristic methods that fix the range space in the first place may converge to a solution that is very different from a minimum of the desired cost function. For visualization purpose, the two dimensional model is represented by the ellipse
\begin{equation*}
	{\cal E} = \{\tilde{\vec{x}}_i\in\mathbb{R}^2: \tilde{\vec{x}}_i^T\mat{R}^2\tilde{\vec{x}}_i=1\},
	\qquad \mathrm{where}\quad 
	\tilde{\vec{x}}_i=\frac{\mat{U}^T\vec{x}_i}{\sqrt{y_i}},
\end{equation*}
and $(\mat{U},\mat{R}^2)$ are computed with algorithm \eqref{eq:batch-polar}, either in the setting $\lambda=0$ that fixes the subspace to the PCA subspace (left) or in the setting $\lambda=0.5$ that simultaneously learned $\mat{U}$ and $\mat{B}$ (right). A perfect fit is obtained when all $\tilde{\vec{x}}_i$ are located on ${\cal E}$, which is the locus of points where $\hat{y}_i=y_i$.

\subsubsection{Influence of $\lambda$ on the Algorithm Based on the Polar Geometry} 
\begin{figure}[!ht]
	\centering
	\begin{tabular}{cc}
		\includegraphics[width=0.48\textwidth]{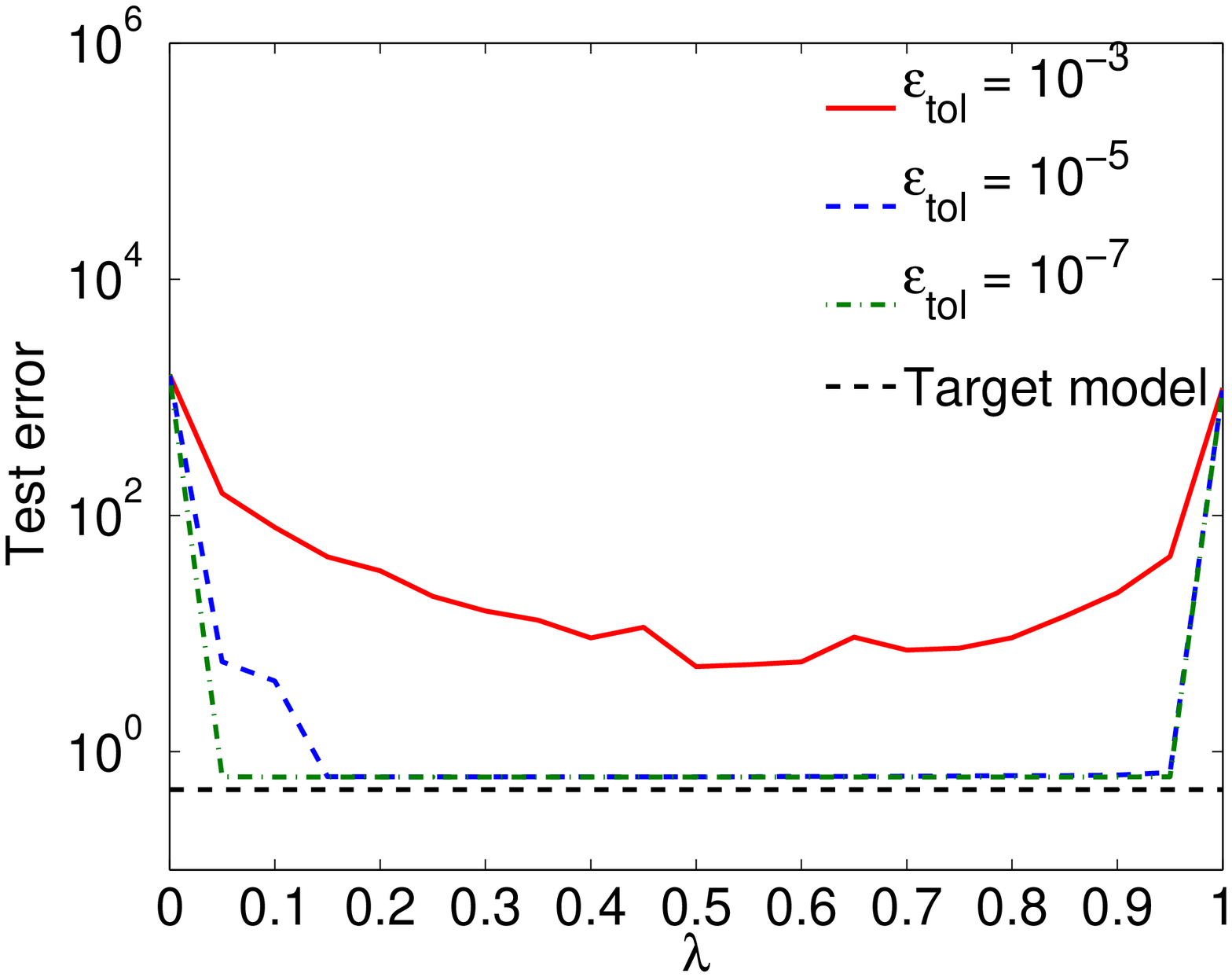} &
		\includegraphics[width=0.48\textwidth]{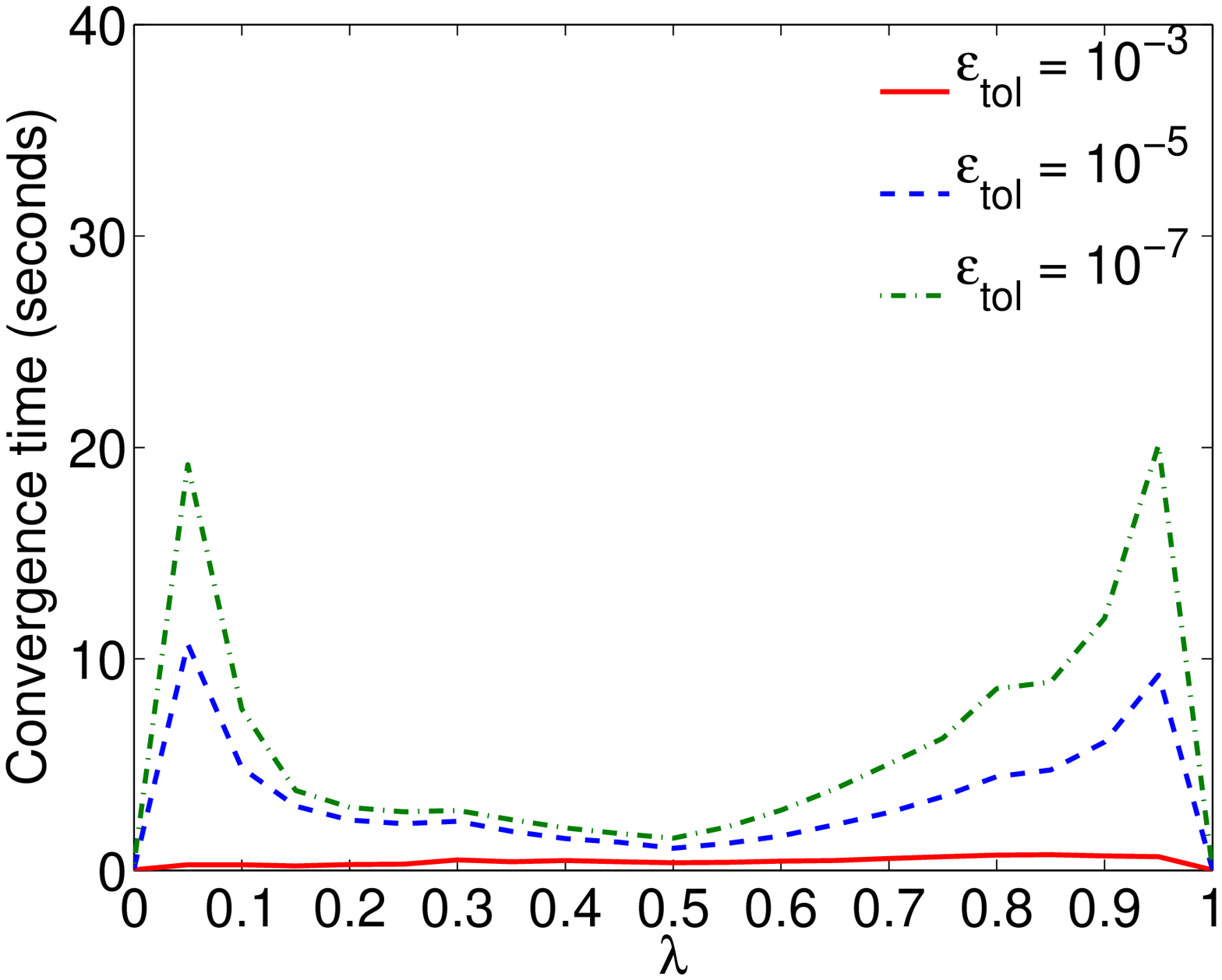}\\
	\end{tabular}
	\caption{Influence of $\lambda$.}
	\label{fig:lambda}
\end{figure}
In theory, the parameter $\lambda$ should not influence the algorithm since it has no effect on the first-order optimality conditions except for its two extreme values $\lambda = 0$ and $\lambda = 1$. In practice however, a sensitivity to this parameter is observed due to the finite tolerance of the stopping criterion: the looser the tolerance, the more sensitive to $\lambda$. 

To investigate the sensitivity to $\lambda$, we try to recover a target parameter $\mat{W}^*\in S_+(5,10)$ using pairs $(\vec{x}_i,y_i)$ generated according to \eqref{eq:obsmodel}. We generate $10$ random regression problems with $1000$ samples partitioned into $500$ learning samples and $500$ test samples. We compute the mean test error and the mean convergence time as a function of $\lambda$ for different values of $\epsilon_{tol}$. The results are presented in Figure \ref{fig:lambda}. As $\epsilon_{tol}$ decrease, the test error becomes insensitive to $\lambda$, but an influence is observed on the convergence time of the algorithm.

In view of these results, we recommend the value $0.5$ as the default setting for $\lambda$. Unless specified otherwise, we therefore use this particular value for all experiments in this paper.

\subsubsection{Online vs. Batch} 
\begin{figure}[!ht]
	\centering
	\begin{tabular}{cc}
		\includegraphics[width=0.48\textwidth]{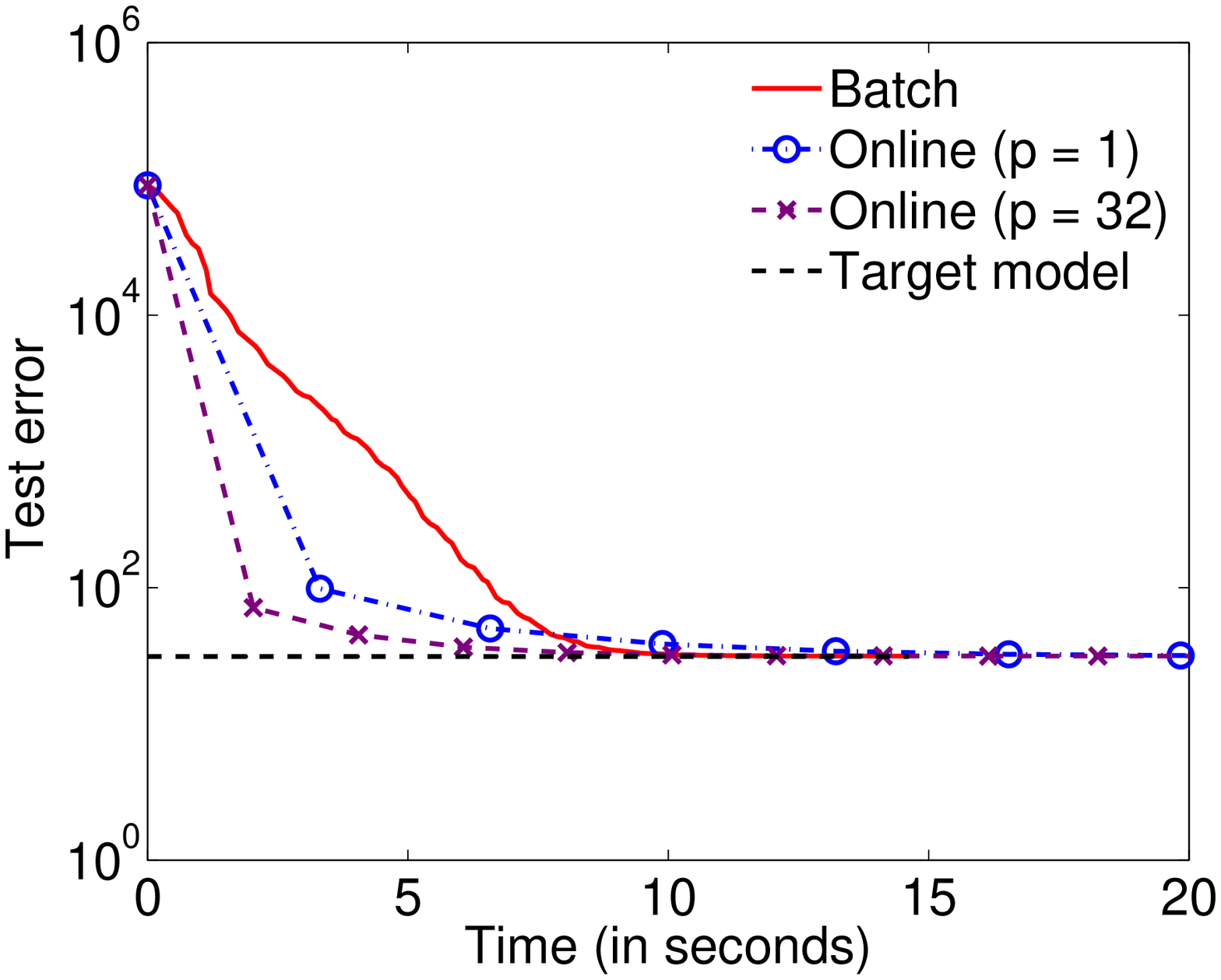} &
		\includegraphics[width=0.48\textwidth]{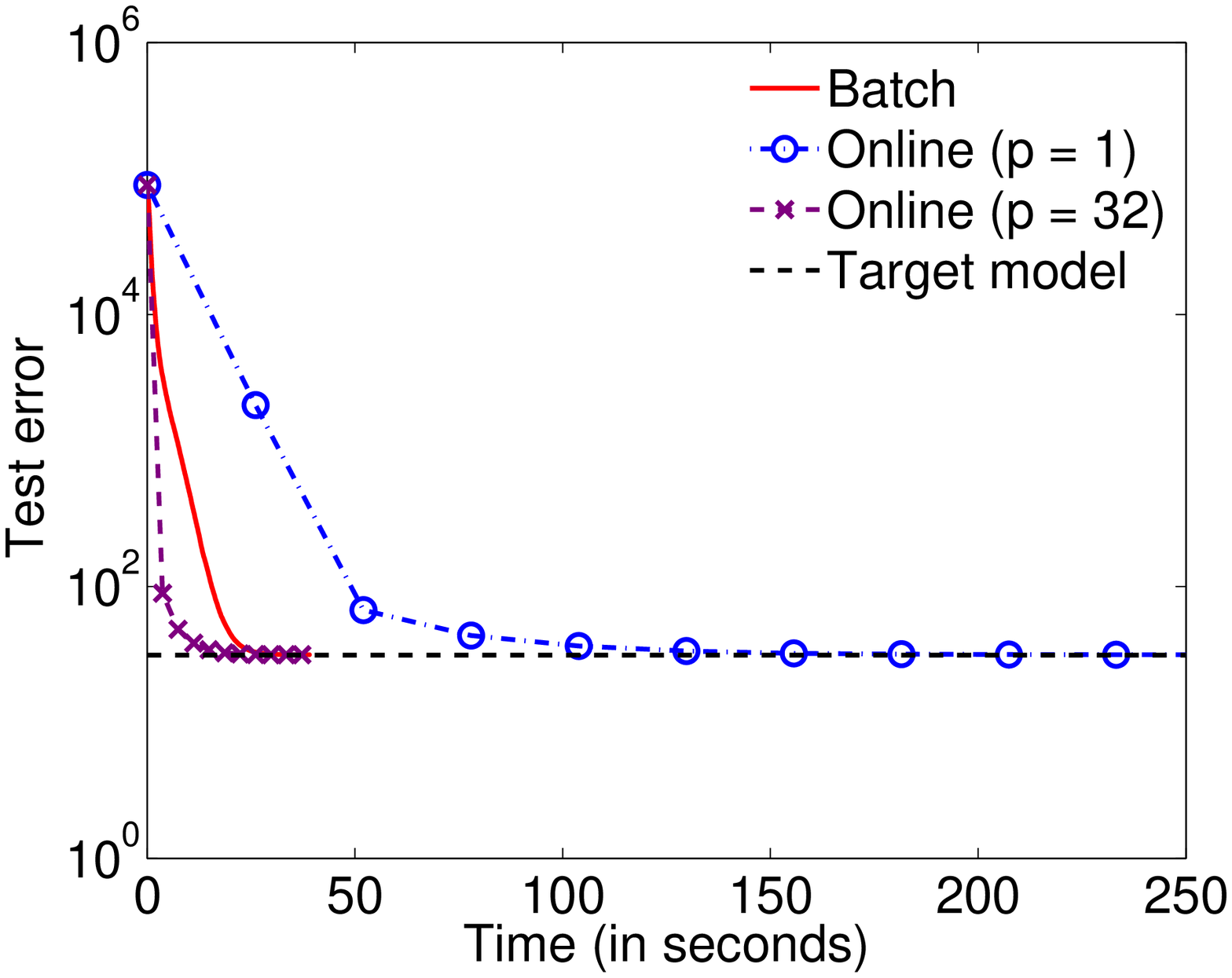}\\
		(a) Flat geometry & (b) Polar geometry\vspace{5mm}\\
	\end{tabular}
	\caption{Online vs Batch. For a large number of samples, online algorithms reduce the test error much more rapidly than batch ones. Using the mini-batch extension generally improve significantly the performance.}
	\label{fig:online-batch}
\end{figure}
This experiment shows that when a large amount of sample is available ($80,000$ training samples and $20,000$ test samples for learning a parameter $\mat{W}^*$ in $S_+(10,50)$), online algorithms minimize the test error more rapidly than batch ones. It further shows that the mini-batch extension allows to improve significantly the performance compared to the plain stochastic gradient descent setting ($p=1$). We observe that the mini-batch size $p = 32$ generally gives good results. Figure~\ref{fig:online-batch} report the test error as a function of the learning time, that is, the time after each iteration for batch algorithm and the time after each epoch for online algorithms. For the algorithm based on the polar geometry, the mini-batch extension is strongly recommended to amortize the larger cost of each update. 

\subsection{Kernel Learning}\label{sec:exp-kernel-learning}
In this section, the proposed algorithms are applied to the problem of learning a kernel matrix from pairwise distance constraints between data samples. As mentioned earlier, we only consider this problem in the transductive setting, that is, all samples $\vec{x}_1,...\vec{x}_n$ are available up front and the learned kernel do not generalize to new samples. 

\subsubsection{Experimental Setup}
After transformation of the data with the kernel map $\vec{x}\mapsto\phi(\vec{x})$, the purpose is to compute a fixed-rank kernel matrix based on a limited amount of pairwise distances in the kernel feature space and on some information about class labels.

Distance constraints are generated as $\hat{y}_{ij} \leq y_{ij} (1-\alpha)$ for identically labeled samples and $\hat{y}_{ij} \geq y_{ij} (1+\alpha)$ for differentially labeled samples, where $\alpha\geq 0$ is a scaling factor, $y_{ij}=\|\phi(\vec{x}_i)-\phi(\vec{x}_j)\|^2$ and $\hat{y}_{ij}=\trace(\mat{W}(\vec{e}_i-\vec{e}_j)(\vec{e}_i-\vec{e}_j)^T)=(\vec{e}_i-\vec{e}_j)^T \mat{W} (\vec{e}_i-\vec{e}_j)$.

We investigate both the influence of the amount of side-information provided, the influence of the approximation rank and the computational time required by the algorithms.

To quantify the performance of the learned kernel matrix, we perform either a classification or a clustering of the samples based on the learned kernel. For classification, we compute the test set accuracy of a $k$-nearest neighbor classifier ($k=5$) using a two-fold cross-validation protocol (results are averaged over 10 random splits). For clustering, we use the $K$-means algorithm with the number of clusters equal to the number of classes in the problem. To overcome K-means local minima, $10$ runs  are performed in order to select the result that has lead to the smaller value of the K-means objective. The quality of the clustering is measured by the normalized mutual information (NMI) shared between the random variables of cluster indicators $C$ and target labels $T$ \citep{strehl00a},
\begin{equation*}
	NMI = \frac{2\ I(C;T)}{(H(C) + H(T))},
\end{equation*}
where $I(X_1;X_2)=H(X_1) - H(X_1|X_2)$ is the mutual information between the random variables $X_1$ and $X_2$, $H(X_1)$ is the Shannon entropy of $X_1$, and $H(X_1|X_2)$ is the conditional entropy of $X_1$ given $X_2$. This score ranges from 0 to 1, the larger the score, the better the clustering quality.

\subsubsection{Compared Methods}
We compare the following methods:
\begin{enumerate}
	\item Batch algorithms \eqref{eq:batch-flat} and \eqref{eq:batch-polar}, adapted to handle inequalities (see Section \ref{sec:inequalities}),
	\item The kernel learning algorithm LogDet-KL \citep{kulis09a} which learn kernel matrices of fixed range space for a given set of distance constraints.
	\item The kernel spectral regression (KSR) algorithm of \cite{cai07a} using a similarity matrix $\mat{N}$ constructed as follows. Let $\mat{N}$ be the adjacency matrix of a $5$-NN graph based on the initial kernel. We modify $\mat{N}$ according to the set of available constraints: $\mat{N}_{ij}=1$ if samples $\vec{x}_i$ and $\vec{x}_j$ belong to the same class (must-link constraint), $\mat{N}_{ij}=0$ if samples $\vec{x}_i$ and $\vec{x}_j$ do not belong to the same class (cannot-link constraint).
	\item The Maximum Variance Unfolding (MVU) algorithm \citep{weinberger04a},
	\item The Kernel PCA algorithm \citep{scholkopf98a}.
\end{enumerate}
The last two algorithms are unsupervised techniques that are provided as baselines.

\begin{figure}[!th]
\centering
\begin{tabular}{cc}
	\includegraphics[width=0.48\textwidth]{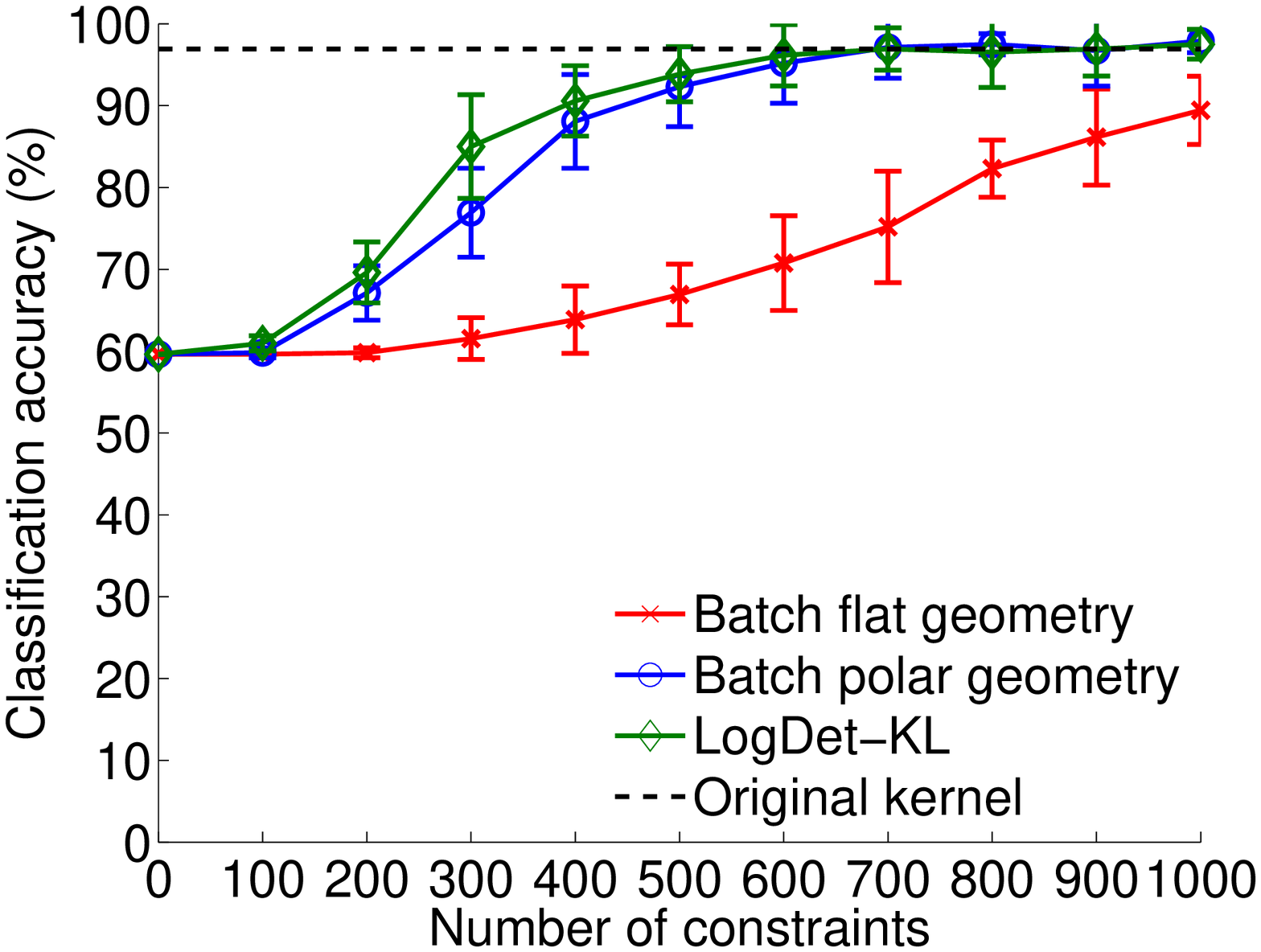} &
	\includegraphics[width=0.48\textwidth]{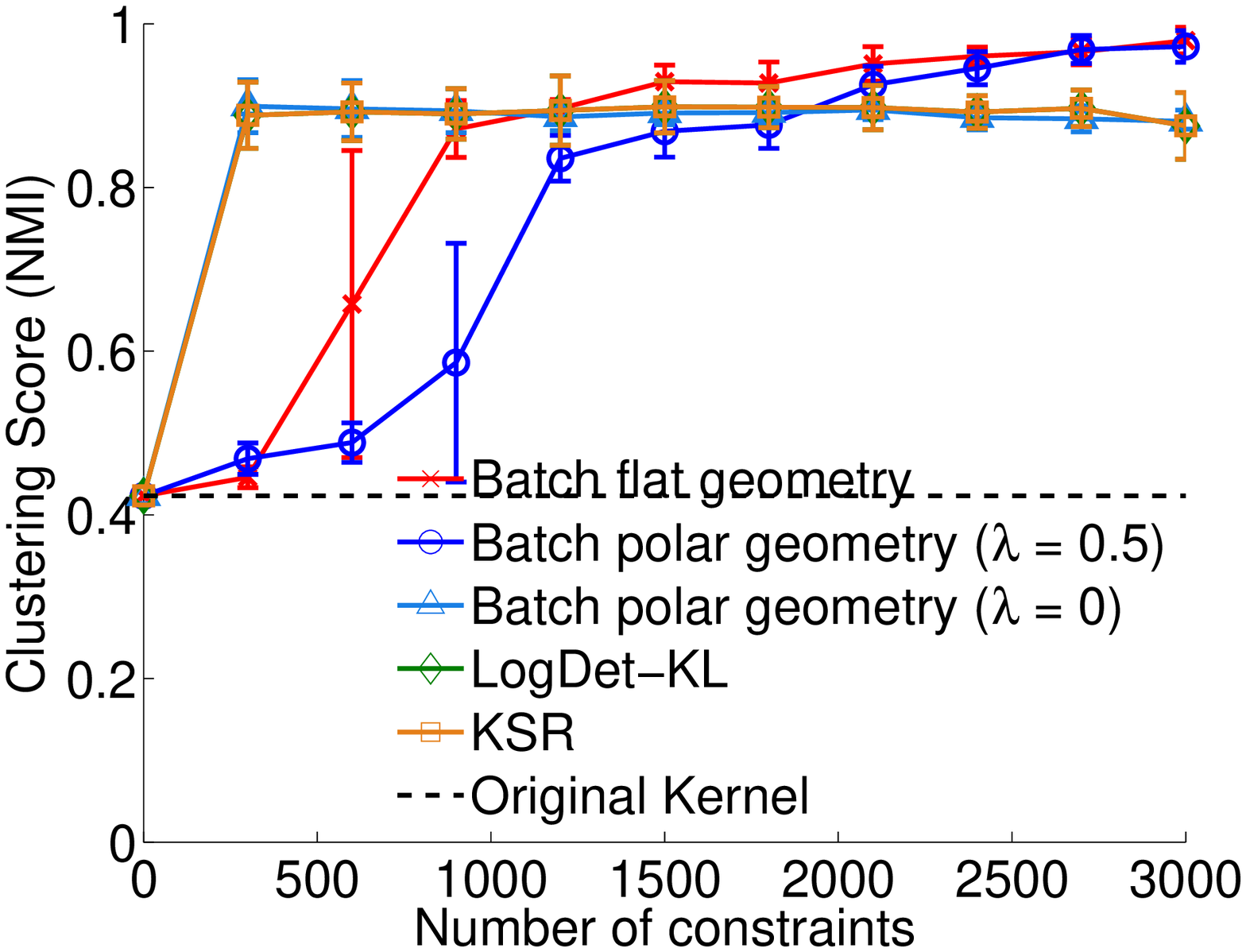}\\
\end{tabular}
\caption{{\bf Left}: full-rank kernel learning on the Gyrb data set. The algorithm based on the polar geometry competes with LogDet-KL. {\bf Right}: low-rank kernel learning on the Digits data set. The proposed algorithms outperform the compared methods as soon as a sufficiently large number of constraints is provided.}
\label{fig:gyrb-digits}
\end{figure}
\subsubsection{Results}

The first experiment is reproduced from \cite{tsuda05a} and \cite{kulis09a}. The goal is to reconstruct the GyrB kernel matrix based on distance constraints only. This matrix contains information about the proteins of three bacteria species. The distance constraints are randomly generated from the original kernel matrix with $\alpha=0$. We compare the proposed batch methods with the LogDet-KL algorithm, the only competing algorithm that also learns directly from distance constraints. This algorithm is the best performer reported by \cite{kulis09a} for this experiment. All algorithms start from the identity matrix that do not encode any domain information. Figure~\ref{fig:gyrb-digits} (left) reports the $k$-NN classification accuracy as a function of the number of distance constraints provided. In this full-rank learning setting, the algorithm based on the polar geometry compete with the LogDet-KL algorithm. The convergence time of the algorithm based on the polar geometry is however much faster (0.15 seconds versus 58 seconds for LogDet-KL when learning $1000$ constraints). The algorithm based on the flat geometry has inferior performance when too few constraints are provided. This is because in the kernel learning setting, updates of this algorithm only involve the rows and columns that correspond to the set of points for which constraints are provided. It may thus result in a partial update of the kernel matrix entries. This issue disappears as the number of provided constraints increases.

The second experiment is reproduced from \cite{kulis09a}. It aims at improving an existing low-rank kernel using limited information about class labels. A rank-16 kernel matrix is computed for clustering a database of $300$ handwritten digits randomly sampled from the 3, 8 and 9 digits of the Digits dataset (since we could not find out the specific samples that have been selected by \cite{kulis09a}, we made our own samples selection). The distance constraints are randomly sampled from a linear kernel on the input data $\mat{K}=\mat{X}\mat{X}^T$ and $\alpha=0.25$. The results are presented in Figure~\ref{fig:gyrb-digits} (right). The figure shows that KSR, LogDet-KL and the algorithm based on the polar geometry with $\lambda=0$ perform similarly. These methods are however outperformed by the proposed algorithms (flat geometry and polar geometry with $\lambda=0.5$) when the number of constraints is large enough. This experiment also enlightens the flexibility of the polar geometry, which allows us to fix the subspace in situations where too few constraints are available.

\begin{figure}[!th]
\centering
\begin{tabular}{cc}
 \includegraphics[width=0.48\textwidth]{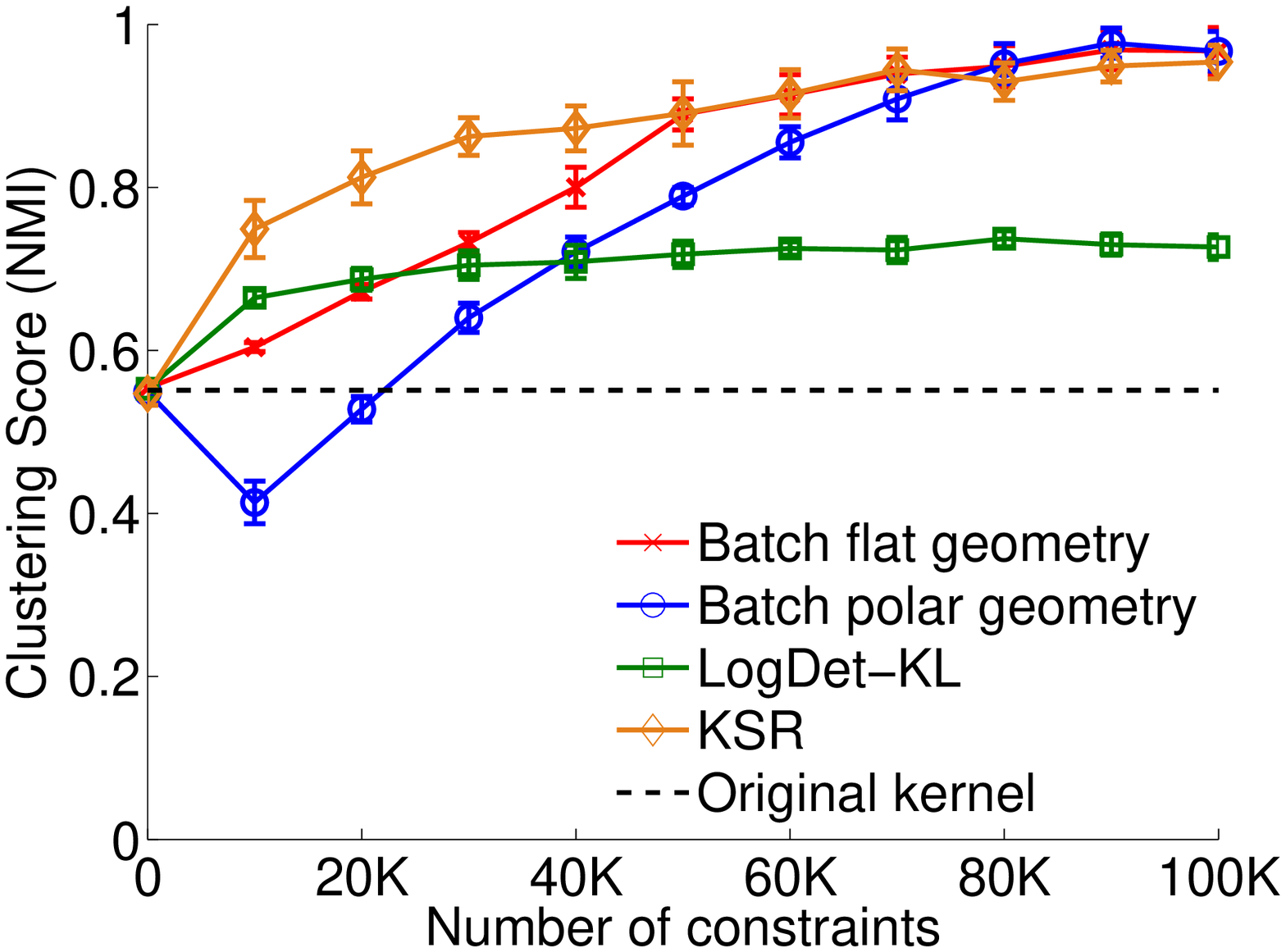} &
 \includegraphics[width=0.48\textwidth]{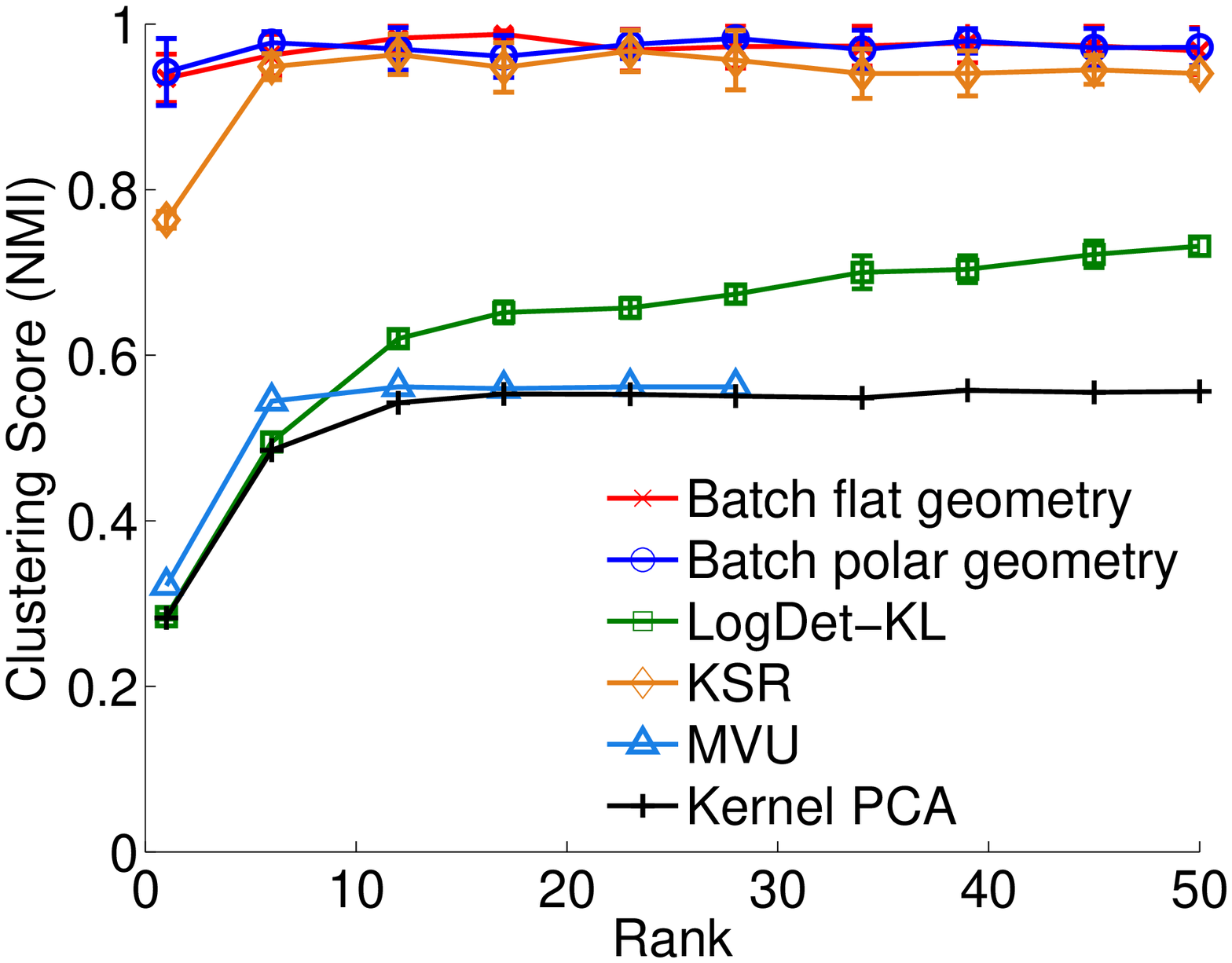}\\
\end{tabular}
\caption{Clustering the USPS data set. {\bf Left:} clustering score versus number of constraints. {\bf Right:} clustering score versus approximation rank. When the number of provided constraints is large enough, the proposed algorithms perform as good as the KSR algorithm. It outperforms the LogDet-KL algorithm and baselines.}
\label{fig:usps}
\end{figure}

Finally, we tackle the kernel learning problem on a larger data set. We use the test set of the USPS dataset,\footnote{We use the ZIP code data from \texttt{http://www-stat-class.stanford.edu/\textasciitilde tibs/ElemStatLearn/data.html} .} which contains $2007$ samples of handwritten zip code digits. The data are first transformed using the kernel map $\kappa(\vec{x}_i,\vec{x}_j)=\exp(-\gamma\|\vec{x}_i-\vec{x}_j\|_2^2)$ with $\gamma=0.001$ and we further center the data in the kernel feature space. Pairwise distance constraints are randomly sampled from that kernel matrix with $\alpha=0.5$. Except KSR that has its own initialization procedure, algorithms start from the kernel matrix provided by kernel PCA.

Figure \ref{fig:usps} (left) shows the clustering performance as a function of the number of constraints provided when the approximation rank is fixed to $r=25$. Figure \ref{fig:usps} (right) reports the clustering performance as a function of the approximation rank when the number of constraints provided is fixed to $100K$. When the number of provided constraints is large enough, the proposed algorithms perform as good as KSR and outperform the LogDet-KL method that learn a kernel of fixed-range space. Average computational times for learning a rank-$6$ kernel from $100K$ constraints are $0.57$ seconds for KSR, $3.25$ seconds for the algorithm based on the flat geometry, $46.78$ seconds for LogDet-KL and $47.30$ seconds for the algorithm based on the polar geometry. In comparison, the SDP-based MVU algorithm takes $676.60$ seconds to converge.

\subsection{Mahalanobis Distance Learning}\label{sec:exp-distance-learning}
In this section, we tackle the problem of learning from data a Mahalanobis distance for supervised classification and compare our methods to state-of-the-art Mahalanobis metric learning algorithms.

\subsubsection{Experimental Setup}
For the considered problem, the purpose is to learn the parameter $\mat{W}$ of a Mahalanobis distance $d_{\mat{W}}(\vec{x}_i,\vec{x}_j)=(\vec{x}_i-\vec{x}_j)^T \mat{W} (\vec{x}_i-\vec{x}_j)$, such that the distance satisfies as much as possible a given set of constraints. As in the paper of \cite{davis07a}, we generate the constraints from the learning set of samples as $d_{\mat{W}}(\vec{x}_i,\vec{x}_j)\leq l$ for same-class pairs and $d_{\mat{W}}(\vec{x}_i,\vec{x}_j)\geq u$ for different-class pairs. The scalars $u$ and $l$ estimate the $95$-th and $5$-th percentiles of the distribution of Mahalanobis distances parameterized by a chosen baseline $\mat{W}_0$. The performance of the learned distance is then quantified by the test error rate of a $k$-nearest neighbor classifier based on the learned distance. All experiments use the setting $k=5$, breaking ties arbitrarily. Unless for the Isolet data set for which a specific train/test partition is provided, error rates are computed using two-fold cross validation. Results are averaged over $10$ random partitions.

\subsubsection{Compared Methods}
We compare the following distance learning algorithms: 
\begin{enumerate}
	\item Batch algorithms \eqref{eq:batch-flat} and \eqref{eq:batch-polar},
	\item ITML \citep{davis07a},
	\item LMNN \citep{weinberger09a},
	\item Online algorithms \eqref{eq:online-flat} and \eqref{eq:online-polar},
	\item LEGO \citep{jain08a},
	\item POLA \citep{shalev-shwartz04a}.
\end{enumerate}
When some methods require the tuning of an hyper-parameter, this is performed by a two-fold cross-validation procedure. The slack parameter of ITML as well as the step size of POLA are selected in the range of values $10^k$ with $k=-3,...,3$. The step size of LEGO is selected in this same range of value for the UCI datasets, and in the range of value $10^k$ with $k=-10,...,-5$ for the larger data sets Isolet and Prostate.

\subsubsection{Results}
\begin{figure}[!ht]
\centering
	\includegraphics[angle=-90,width=285pt]{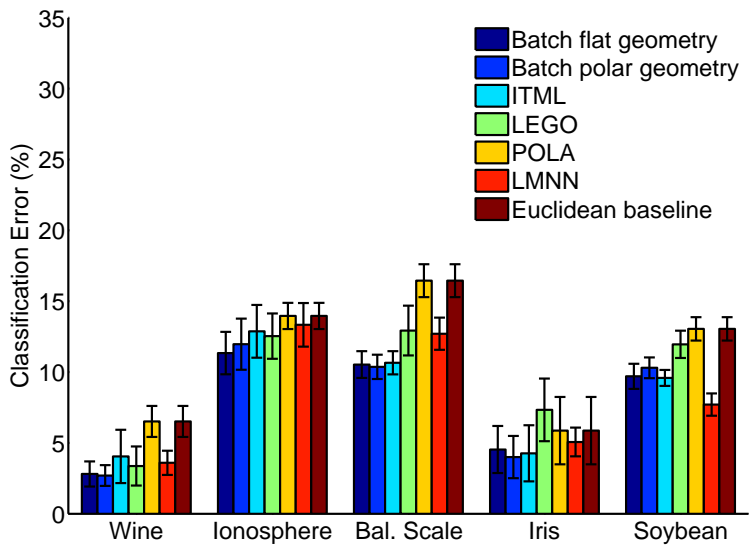}
\caption{Full-rank distance learning on the UCI data sets. The proposed algorithms compete with state-of-the-art methods for learning a full-rank Mahalanobis distance.}
\label{fig:metric-uci}
\end{figure}
Reproducing a classical benchmark experiment from \cite{kulis09a}, we demonstrate that the proposed batch algorithms compete with state-of-the-art full-rank Mahalanobis distance learning algorithms on several UCI datasets (Figure~\ref{fig:metric-uci}). We have not included the online versions of our algorithms in this comparison because we consider that the batch approaches are more relevant on such small datasets. Except POLA and LMNN which do not learn from provided pairwise constraints, all algorithms process $40c(c-1)$ constraints, where $c$ is the number of classes in the data. We choose the Euclidean distance ($\mat{W}_0=\mat{I}$) as the baseline distance for initializing the algorithms. Figure \ref{fig:metric-uci} reports the results. The two proposed algorithms compete favorably with the other full-rank distance learning techniques, achieving the minimal average error for $4$ of the $5$ considered data sets.

\begin{figure}[!ht]
\centering
\begin{tabular}{cc}
	\includegraphics[width=0.48\textwidth]{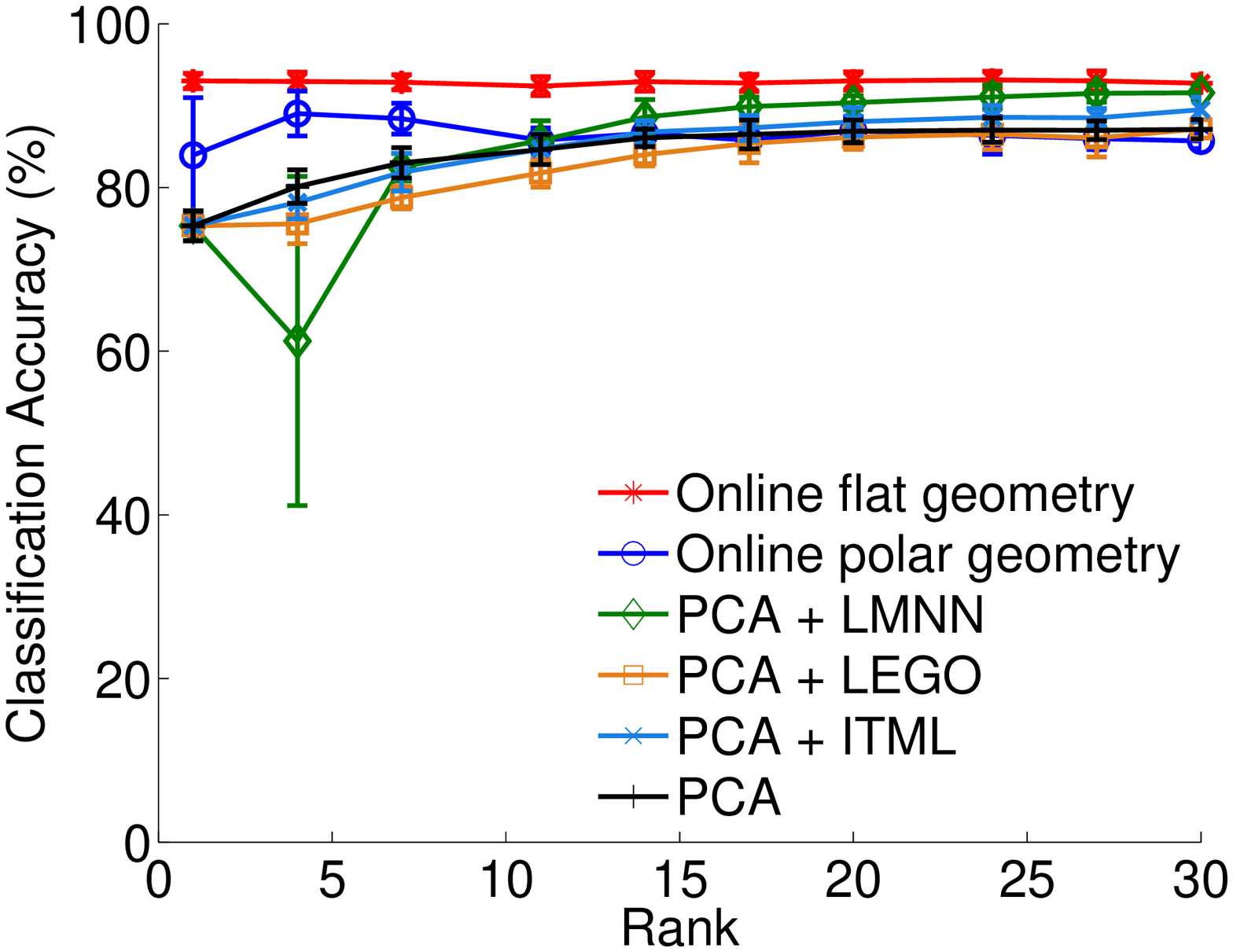} &
	\includegraphics[width=0.48\textwidth]{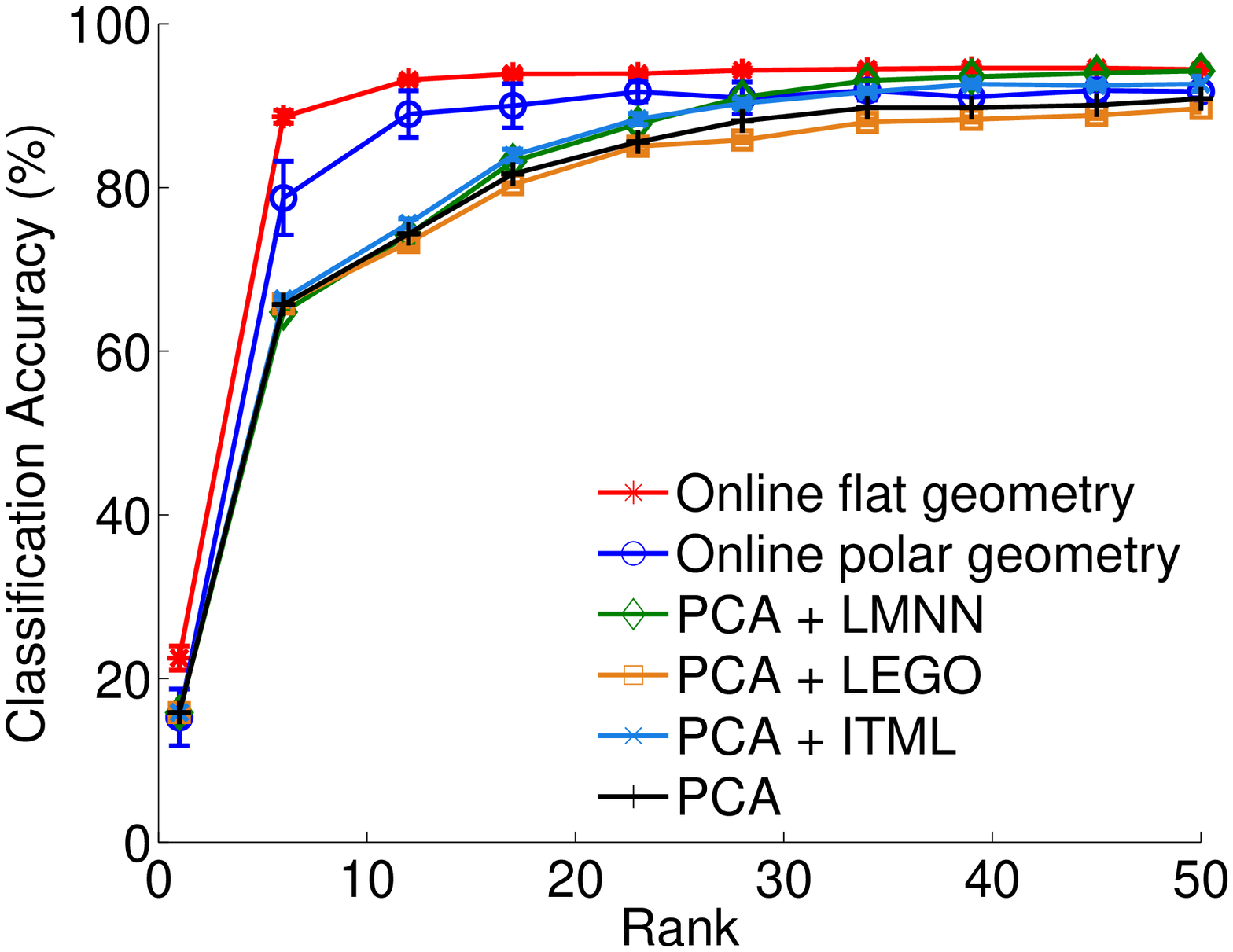} \\
	(a) Prostate & (b) Isolet\\
\end{tabular}
\caption{Low-rank Mahalanobis distance learning. For low values of the rank, the proposed algorithms perform much better than the methods that project the data on the top principal directions and learn a full-rank distance on the projected data.}
\label{fig:large-scale}
\end{figure}

We finally evaluate the proposed algorithms on higher-dimensional data sets in the low-rank regime (Figure \ref{fig:large-scale}). The distance constraints are generated as in the full-rank case, but the initial baseline matrix is now computed as $\mat{W}_0=\mat{G}_0\mat{G}_0^T$, where $\mat{G}_0$'s columns are the top principal directions of the data. For the Isolet data set, $100K$ constraints are generated, and $10K$ constraints are generated for the Prostate data set. For scalability reasons, algorithms LEGO, LMNN and ITML must proceed in two steps: the data are first projected onto the top principal directions and then a full-rank distance is learned within the subspace spanned by these top principal directions. In contrast, our algorithms are initialized with the top principal direction, but they operate on the data in their original feature space. Overall, the proposed algorithms achieve much better performance than the methods that first reduce the data. This is particularly striking when the rank is very small compared to problem size. The performance gap reduces as the rank increases. However, for high-dimensional problems, one is usually interested in efficient low-rank approximations that gives satisfactory results. 

\section{Conclusion}
In this paper, we propose gradient descent algorithms to learn a regression model parameterized by a fixed-rank positive semidefinite matrix. The rich Riemannian geometry of the set of fixed-rank PSD matrices is exploited through a geometric optimization approach.

The resulting algorithms overcome the main difficulties encountered by the previously proposed methods as they scale to high-dimensional problems, and they naturally enforce the rank constraint as well as the positive definite property while leaving the range space of the matrix free to evolve during optimization.

We apply the proposed algorithms to the problem of learning a distance function from data, when the distance is parameterized by a fixed-rank positive semidefinite matrix. The good performance of the proposed algorithms is illustrated over several benchmarks.

\section*{Acknowledgements}
This paper presents research results of the Belgian Network DYSCO (Dynamical Systems, Control, and Optimization), funded by the Interuniversity Attraction Poles Programme, initiated by the Belgian State, Science Policy Office. The scientific responsibility rests with its authors. Gilles Meyer is supported as an FRS-FNRS research fellow (Belgian Fund for Scientific Research).

\appendix
\section{Convergence Proof of Algorithm \eqref{eq:online-flat}}\label{sec:proof-sgd}
\cite{bottou98a} reviews the mathematical tools required to prove almost sure convergence, that is asymptotic convergence with probability one, of stochastic gradient algorithms. Almost sure convergence follows from the following five assumptions:
\begin{itemize}
\item[(A1)] $F(\mat{G}) = \mathbb{E}_{\mat{X},y}\{\ell(\hat{y},y)\}\geq 0$ is three times differentiable with bounded derivatives,
\item[(A2)] the step sizes satisfy $\sum_{t=1}^\infty \eta_t^2 < \infty$ and $\sum_{t=1}^\infty \eta_t = \infty$,
\item[(A3)] $\mathbb{E}_{\mat{X},y}\{\|\grad{}{f(\mat{G})}\|^2_F\} \leq  k_1 + k_2 \|\mat{G}\|^2_F$, where $f(\mat{G})=\ell(\hat{y},y)$,
\item[(A4)] $\displaystyle\exists h_1 > 0, \inf_{\|{\mat{G}}\|^2_F > h_1} \trace(\mat{G}^T \mathbb{E}_{\mat{X},y}\{\grad{}{f(\mat{G})}\}) > 0$,
\item[(A5)]   $\displaystyle\exists h_2 > h_1, \forall (\mat{X},y)\in \mathcal{X}\times\mathcal{Y},
                 \sup_{\|\mat{G}\|^2_F < h_2} \|\grad{}{f(\mat{G})}\|_F \leq k_3$,
\end{itemize}
where $\|\cdot\|_F$ is the Frobenius norm. Provided that algorithm~\eqref{eq:online-flat} is equipped with an adaptive step size $s_{t}= \eta_{t}/\max(\|\mat{G}_{t}\|_{F}^{2},1)$, where $\eta_t$ satisfy (A2), we have the following convergence result.

\begin{proposition}
For bounded data $(\mat{X},y)$, algorithm \eqref{eq:online-flat} equipped with the step size $s_{t}$ defined above converges almost surely to the set of stationary points of the cost function $\mathbb{E}_{\mat{X},y}\{(\hat{y}-y)^{2}/2\}$.
\end{proposition}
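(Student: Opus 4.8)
The plan is to exploit the fact that, in the flat geometry, the factor $\mat{G}$ lives in the \emph{linear} space $\mathbb{R}^{d\times r}$ and the retraction reduces to the straight-line update, so that \eqref{eq:online-flat} is nothing but an ordinary Euclidean stochastic gradient recursion. Consequently no quotient-manifold machinery is needed and the almost-sure convergence theorem of \cite{bottou98a} applies verbatim once its hypotheses are checked. The whole proof therefore reduces to verifying (A1)--(A5) for the present cost and update, and then invoking that theorem; the bounded-data hypothesis and the adaptive step size $s_t = \eta_t/\max(\|\mat{G}_t\|_F^2,1)$ are precisely the ingredients that make these hypotheses hold.

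First I would record the explicit form of the relevant quantities. The instantaneous cost $f(\mat{G}) = \tfrac12(\hat{y}-y)^2$ with $\hat{y} = \trace(\mat{G}\mat{G}^T\Sym(\mat{X}))$ is a degree-four polynomial in the entries of $\mat{G}$, hence smooth, and $F \geq 0$ as the expectation of a square; this settles the differentiability and nonnegativity parts of (A1). The gradient $\grad{}{f(\mat{G})} = 2(\hat{y}-y)\Sym(\mat{X})\mat{G}$ grows \emph{cubically} in $\|\mat{G}\|_F$, so the raw field violates the polynomial-growth bound (A3) and the boundedness in (A1), (A5). This is exactly why the step size carries the normalization $1/\max(\|\mat{G}\|_F^2,1)$: the assumptions are to be checked for the effective search direction $\grad{}{f(\mat{G})}/\max(\|\mat{G}\|_F^2,1)$ driven by the admissible gains $\eta_t$. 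Dividing a cubically growing field by $\|\mat{G}\|_F^2$ yields at most linear growth, so for bounded data its second moment is bounded by $k_1 + k_2\|\mat{G}\|_F^2$, giving (A3); boundedness on any ball $\{\|\mat{G}\|_F^2 < h_2\}$ follows from continuity and compactness, giving (A5) and the remaining part of (A1). Condition (A2) is assumed on $\eta_t$.

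The substantive step is the confinement condition (A4). Here I would compute $\trace(\mat{G}^T\grad{}{f(\mat{G})}) = 2(\hat{y}-y)\hat{y} = 2(\hat{y}^2 - y\hat{y})$, using $\trace(\mat{G}^T\Sym(\mat{X})\mat{G}) = \trace(\Sym(\mat{X})\mat{G}\mat{G}^T) = \hat{y}$. Taking expectations, $\mathbb{E}\{\hat{y}^2\}$ is a quadratic form in $\mat{W}=\mat{G}\mat{G}^T$ governed by the second-moment operator of $\Sym(\mat{X})$, hence grows like $\|\mat{W}\|_F^2 \geq \tfrac1r\|\mat{G}\|_F^4$ provided this operator is nondegenerate on the relevant directions, whereas $\mathbb{E}\{y\hat{y}\}$ is linear in $\mat{W}$ and grows only like $\|\mat{G}\|_F^2$ for bounded data. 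Thus the quartic term dominates for $\|\mat{G}\|_F$ large, and after dividing by $\max(\|\mat{G}\|_F^2,1)$ the expected effective field satisfies $\trace(\mat{G}^T\mathbb{E}\{\cdot\}) \gtrsim \|\mat{G}\|_F^2 > 0$ outside a ball, which is (A4). I expect this to be the main obstacle: it is the only place where a genuine \emph{coercivity / non-degeneracy} assumption on the data distribution enters (one needs $\mathbb{E}\{\hat{y}^2\}$ to grow quartically rather than collapse along some direction of the positive semidefinite cone), and it is also where the interplay between the quartic cost and the quadratic normalization must be controlled simultaneously.

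With (A1)--(A5) established for the normalized field, I would conclude by invoking \cite{bottou98a}: the confinement (A4) first confines the iterates to a bounded region almost surely, on which the smooth cost has bounded derivatives, and the Robbins--Monro conditions (A2) together with the growth bound (A3) then deliver almost-sure convergence of $\mat{G}_t$ to the set of stationary points of $\mathbb{E}_{\mat{X},y}\{(\hat{y}-y)^2/2\}$, which is the claim.
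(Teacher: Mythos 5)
Your proposal follows essentially the same route as the paper: both observe that in the flat geometry $\mat{G}$ lives in the linear space $\mathbb{R}^{d\times r}$ so that \eqref{eq:online-flat} is an ordinary Euclidean stochastic gradient recursion, both identify the cubic growth of $\grad{}{f(\mat{G})}=2(\hat{y}-y)\Sym(\mat{X})\mat{G}$ as the obstruction to applying \cite{bottou98a} directly, and both use the adaptive step size $s_t=\eta_t/\max(\|\mat{G}_t\|_F^2,1)$ as the device that restores the required growth bounds. The one point where you diverge is the confinement step. The paper does \emph{not} verify (A4); it builds confinement as a separate Lyapunov argument, showing that $u_t=\max(h_2,\|\mat{G}_t\|_F^2)$ is a positive supermartingale-type process bounded almost surely by $h_2$, with the key ingredient being the bound $\mathbb{E}_{\mat{X},y}\{\|\grad{}{f(\mat{G})}/\max(\|\mat{G}\|_F^2,1)\|_F^2\}\leq k_1+k_2\|\mat{G}\|_F^2$ on the normalized field; its second step (almost sure convergence of $F(\mat{G}_t)$, then of $\grad{}{F(\mat{G}_t)}$ to zero) corresponds to your invocation of Bottou's theorem. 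You instead check (A4) head-on via $\trace(\mat{G}^T\grad{}{f(\mat{G})})=2(\hat{y}^2-y\hat{y})$, and, as you correctly flag, this forces a coercivity hypothesis: the second-moment operator of $\Sym(\mat{X})$ must be nondegenerate so that $\mathbb{E}\{\hat{y}^2\}$ grows quartically in $\|\mat{G}\|_F$. That hypothesis is not part of the proposition, which assumes only bounded data (and indeed (A4) genuinely fails for data supported on a proper subspace, where distant stationary rays exist). So, taken literally, your argument proves the statement under an additional non-degeneracy assumption on the data distribution, whereas the paper's Lyapunov-based confinement is precisely its workaround for not having (A4); your identification of this as ``the main obstacle'' is accurate, and is the gap you would need to close (for example by restricting the dynamics to the subspace spanned by the data, on which non-degeneracy holds) to recover the proposition in the stated generality.
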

\begin{proof}
The proof is completed in two steps. First, it is shown that the stochastic sequence
\begin{equation*}
  u_t = \max(h_2,\|{\mat{G}_t}\|^2_F),
\end{equation*}
defines a Lyapunov process (always positive and decreasing on average) which is bounded almost surely by $h_2$. This implies that $\mat{G}_t$ is almost surely confined within distance $\sqrt{h_2}$ from the origin and provides almost sure bounds on all continuous functions of $\mat{G}_t$. In \cite{bottou98a}, confinement is essentially based on (A3) and (A4). In the current proof, we rely on the fact that $\mathbb{E}_{\mat{X},y}\{\|\grad{}{f(\mat{G})}/\max(\|\mat{G}\|_{F}^{2},1)\|^2_F\} \leq  k_1 + k_2 \|\mat{G}\|^2_F$.

Second, the Lyapunov process $v_t = F(\mat{G}_t) \geq 0$ is proved to converge almost surely. Convergence of $F(\mat{G}_t)$ is then used to show that $w_t=\grad{}{\ F(\mat{G}_t)}$ tends to zero almost surely. Technical details are adapted from the paper of \cite{bottou98a}.
\end{proof}

In practice, saddle points and local maxima are unstable solutions while convergence to asymptotic plateaus is excluded by (A4). As a result, almost sure convergence to a local minimum of the expected cost is obtained.

\vskip 0.2in
\bibliography{meyer11a}

\begin{thebibliography}{52}
\providecommand{\natexlab}[1]{#1}
\providecommand{\url}[1]{\texttt{#1}}
\expandafter\ifx\csname urlstyle\endcsname\relax
  \providecommand{\doi}[1]{doi: #1}\else
  \providecommand{\doi}{doi: \begingroup \urlstyle{rm}\Url}\fi

\bibitem[Absil et~al.(2004)Absil, Mahony, and Sepulchre]{absil04a}
P.-A. Absil, R.~Mahony, and R.~Sepulchre.
\newblock Riemannian geometry of {G}rassmann manifolds with a view on
  algorithmic computation.
\newblock \emph{Acta Appl. Math.}, 80\penalty0 (2):\penalty0 199--220, 2004.

\bibitem[Absil et~al.(2008)Absil, Mahony, and Sepulchre]{absil08a}
P.-A. Absil, R.~Mahony, and R.~Sepulchre.
\newblock \emph{Optimization Algorithms on Matrix Manifolds}.
\newblock Princeton University Press, 2008.

\bibitem[Arora(2009)]{arora09a}
R.~Arora.
\newblock On learning rotations.
\newblock In Y.~Bengio, D.~Schuurmans, J.~Lafferty, C.~K.~I. Williams, and
  A.~Culotta, editors, \emph{Advances in Neural Information Processing
  Systems}, volume~22, pages 55--63. MIT Press, 2009.

\bibitem[Arsigny et~al.(2007)Arsigny, Fillard, Pennec, and Ayache]{arsigny06a}
V.~Arsigny, P.~Fillard, X.~Pennec, and N.~Ayache.
\newblock Geometric means in a novel vector space structure on symmetric
  positive-definite matrices.
\newblock \emph{SIAM Journal on Matrix Analysis and Applications}, 29\penalty0
  (1):\penalty0 328--347, 2007.

\bibitem[Asuncion and Newman(2007)]{ucidb07a}
A.~Asuncion and D.J. Newman.
\newblock {UCI} machine learning repository.
\newblock \texttt{http://www.ics.uci.edu/\textasciitilde
  mlearn/MLRepository.html}, 2007.
\newblock University of California, Irvine, School of Information and Computer
  Sciences.

\bibitem[Bach and Jordan(2005)]{bach05a}
F.~Bach and M.~I. Jordan.
\newblock Predictive low-rank decomposition for kernel methods.
\newblock In \emph{Proceedings of the 22nd International Conference on Machine
  Learning (ICML)}, 2005.

\bibitem[Bonnabel and Sepulchre(2009)]{bonnabel09a}
S.~Bonnabel and R.~Sepulchre.
\newblock Riemannian metric and geometric mean for positive semidefinite
  matrices of fixed rank.
\newblock \emph{SIAM Journal on Matrix Analysis and Applications}, 31\penalty0
  (3):\penalty0 1055--1070, 2009.

\bibitem[Borg and Groenen(2005)]{borg05a}
I.~Borg and P.~Groenen.
\newblock \emph{Modern Multidimensional Scaling: Theory and Applications}.
\newblock Springer, second edition, 2005.

\bibitem[Bottou(1998)]{bottou98a}
L.~Bottou.
\newblock Online algorithms and stochastic approximations.
\newblock In David Saad, editor, \emph{Online Learning and Neural Networks}.
  Cambridge University Press, 1998.

\bibitem[Bottou(2004)]{bottou04a}
L.~Bottou.
\newblock Stochastic learning.
\newblock In \emph{Advanced Lectures on Machine Learning}, number 3176 in
  Lecture Notes in Artificial Intelligence, LNAI-3176, pages 146--168. Springer
  Verlag, 2004.

\bibitem[Bottou and Bousquet(2007)]{bottou08a}
L.~Bottou and O.~Bousquet.
\newblock The tradeoffs of large scale learning.
\newblock In J.C. Platt, D.~Koller, Y.~Singer, and S.~Roweis, editors,
  \emph{Advances in Neural Information Processing Systems}, volume~20, pages
  161--168. MIT Press, 2007.

\bibitem[Cai et~al.(2007)Cai, He, and Han]{cai07a}
D.~Cai, X.~He, and J.~Han.
\newblock Efficient kernel discriminant analysis via spectral regression.
\newblock In \emph{Proceedings of the International Conference on Data Mining
  (ICDM07)}, 2007.

\bibitem[Chatpatanasiri et~al.(2010)Chatpatanasiri, Korsrilabutr,
  Tangchanachaianan, and Kijsirikul]{chatpatanasiri10a}
R.~Chatpatanasiri, T.~Korsrilabutr, P.~Tangchanachaianan, and B.~Kijsirikul.
\newblock A new kernelization framework for {M}ahalanobis distance learning
  algorithms.
\newblock \emph{Neurocomputing}, 73\penalty0 (10-12):\penalty0 1570--1579,
  2010.

\bibitem[Chen et~al.(1998)Chen, Hua, and Yan]{chen98a}
T.~Chen, Y.~Hua, and W.-{Y}. Yan.
\newblock Global convergence of {O}ja's subspace algorithm for principal
  component extraction.
\newblock \emph{IEEE Transaction Neural Networks}, 9\penalty0 (1):\penalty0
  58--67, 1998.

\bibitem[Cox and Cox(2001)]{cox01a}
T.~F. Cox and M.A.A. Cox.
\newblock \emph{Multidimensional Scaling}.
\newblock Chapman and Hall, 2001.

\bibitem[Crammer(2006)]{crammer06a}
K.~Crammer.
\newblock Online tracking of linear subspaces.
\newblock In \emph{Proceedings of 19th Annual Conference on Learning Theory
  (COLT)}, 2006.

\bibitem[Davis and Dhillon(2008)]{davis08a}
J.~V. Davis and I.~S. Dhillon.
\newblock Structured metric learning for high dimensional problems.
\newblock In \emph{Proceedings of the 14th ACM SIGKDD conference on Knowledge
  Discovery and Data Mining}, 2008.

\bibitem[Davis et~al.(2007)Davis, Kulis, Jain, Sra, and Dhillon]{davis07a}
J.~V. Davis, B.~Kulis, P.~Jain, S.~Sra, and I.~S. Dhillon.
\newblock Information-theoretic metric learning.
\newblock In \emph{Proceedings of the 24th International Conference on Machine
  Learning (ICML)}, 2007.

\bibitem[Edelman et~al.(1998)Edelman, Arias, and Smith]{edelman98a}
A.~Edelman, T.A. Arias, and S.T. Smith.
\newblock The geometry of algorithms with orthogonality constraints.
\newblock \emph{SIAM Journal on Matrix Analysis and Applications}, 20\penalty0
  (2):\penalty0 303--353, 1998.

\bibitem[Faraut and Koranyi(1994)]{faraut94a}
J.~Faraut and A.~Koranyi.
\newblock \emph{Analysis on Symmetric Cones}.
\newblock Oxford University Press, 1994.

\bibitem[Fine et~al.(2001)Fine, Scheinberg, Cristianini, Shawe-{T}aylor, and
  Williamson]{fine01a}
S.~Fine, K.~Scheinberg, N.~Cristianini, J.~Shawe-{T}aylor, and B.~Williamson.
\newblock Efficient {SVM} training using low-rank kernel representations.
\newblock \emph{Journal of Machine Learning Research}, 2:\penalty0 243--264,
  2001.

\bibitem[Globerson and Roweis(2005)]{globerson05a}
A.~Globerson and S.~Roweis.
\newblock Metric learning by collapsing classes.
\newblock In Y.~Weiss, B.~Sch\"{o}lkopf, and J.~Platt, editors, \emph{Advances
  in Neural Information Processing Systems}, volume~18, pages 451--458. MIT
  Press, 2005.

\bibitem[Goldberger et~al.(2004)Goldberger, Roweis, Hinton, and
  Salakhutdinov]{goldberger05a}
J.~Goldberger, S.~Roweis, G.~Hinton, and R.~Salakhutdinov.
\newblock Neighbourhood components analysis.
\newblock In L.~K. Saul, Y.~Weiss, and L.~Bottou, editors, \emph{Advances in
  Neural Information Processing Systems}, volume~17, pages 513--520. MIT Press,
  2004.

\bibitem[Golub and Van~Loan(1996)]{golub06a}
G.~H. Golub and C.~F. Van~Loan.
\newblock \emph{Matrix Computations}.
\newblock The Johns Hopkins University Press, 1996.

\bibitem[Jain et~al.(2008)Jain, Kulis, Dhillon, and Grauman]{jain08a}
P.~Jain, B.~Kulis, I.~S. Dhillon, and K.~Grauman.
\newblock Online metric learning and fast similarity search.
\newblock In D.~Koller, D.~Schuurmans, Y.~Bengio, and L.~Bottou, editors,
  \emph{Advances in Neural Information Processing Systems}, volume~21, pages
  761--768. MIT Press, 2008.

\bibitem[Jain et~al.(2010)Jain, Kulis, and Dhillon]{jain10a}
P.~Jain, B.~Kulis, and I.~S. Dhillon.
\newblock Inductive regularized learning of kernel functions.
\newblock In J.~Lafferty, C.~K.~I. Williams, J.~Shawe-Taylor, R.S. Zemel, and
  A.~Culotta, editors, \emph{Advances in Neural Information Processing
  Systems}, volume~23, pages 946--954. MIT Press, 2010.

\bibitem[Journ\'{e}e et~al.(2010)Journ\'{e}e, Bach, Absil, and
  Sepulchre]{journee09a}
M.~Journ\'{e}e, F.~Bach, P.-A. Absil, and R.~Sepulchre.
\newblock Low-rank optimization for semidefinite convex problems.
\newblock \emph{SIAM Journal on Optimization}, 20\penalty0 (5):\penalty0
  2327--2351, 2010.

\bibitem[Kivinen and Warmuth(1997)]{kivinen97a}
J.~Kivinen and M.K. Warmuth.
\newblock Exponentiated gradient versus gradient descent for linear predictors.
\newblock \emph{Journal of Information and Computation}, 132\penalty0
  (1):\penalty0 1--64, January 1997.

\bibitem[Kulis et~al.(2009)Kulis, Sustik, and Dhillon]{kulis09a}
B.~Kulis, M.~Sustik, and I.~S. Dhillon.
\newblock Low-rank kernel learning with {B}regman matrix divergences.
\newblock \emph{Journal of Machine Learning Research}, 10:\penalty0 341--376,
  2009.

\bibitem[Kwok and Tsang(2003)]{kwok03a}
J.~Kwok and I.~Tsang.
\newblock Learning with idealized kernels.
\newblock \emph{Proceedings of the 20th International Conference on Machine
  learning (ICML)}, 2003.

\bibitem[Lanckriet et~al.(2004)Lanckriet, Cristianini, Bartlett, Ghaoui, and
  Jordan]{lanckriet04a}
G.~Lanckriet, N.~Cristianini, P.~Bartlett, L.~El Ghaoui, and M.I. Jordan.
\newblock Learning the kernel matrix with semidefinite programming.
\newblock \emph{Journal of Machine Learning Research}, 5:\penalty0 27--72,
  2004.

\bibitem[LeCun et~al.(1989)LeCun, Boser, Denker, Henderson, Howard, Hubbard,
  and Jackel]{lecun90c}
Y.~LeCun, B.~Boser, J.~S. Denker, D.~Henderson, R.~E. Howard, W.~Hubbard, and
  L.~D. Jackel.
\newblock Handwritten digit recognition with a back-propagation network.
\newblock In David Touretzky, editor, \emph{Advances in Neural Information
  Processing Systems}, volume~2. Morgan Kaufman, 1989.

\bibitem[Matsuda and Yamaguchi(2001)]{matsuda01a}
Y.~Matsuda and K.~Yamaguchi.
\newblock Global mapping analysis: stochastic gradient algorithm in {SSTRESS}
  and classical {MDS} stress.
\newblock In \emph{Proceedings of International Conference on Neural
  Information Processing}, 2001.

\bibitem[Nocedal and Wright(2006)]{nocedal06a}
J.~Nocedal and S.~J. Wright.
\newblock \emph{Numerical Optimization, Second Edition}.
\newblock Springer, 2006.

\bibitem[Oja(1992)]{oja92a}
E.~Oja.
\newblock Principal components, minor components, and linear neural networks.
\newblock \emph{Neural Networks}, 5:\penalty0 927 -- 935, 1992.

\bibitem[Petricoin et~al.(2002)Petricoin, Ornstein, Paweletz, Ardekani,
  Hackett, Hitt, Velassco, Trucco, Wiegand, Wood, Simone, Levine, Linehan,
  Emmert-Buck, Steinberg, Kohn, and Liotta]{petricoin02a}
E.F. Petricoin, D.K. Ornstein, C.P. Paweletz, A.M. Ardekani, P.S. Hackett, B.A.
  Hitt, A.~Velassco, C.~Trucco, L.~Wiegand, K.~Wood, C.B. Simone, P.J. Levine,
  W.M. Linehan, M.R. Emmert-Buck, S.M. Steinberg, E.C Kohn, and L.A. Liotta.
\newblock Serum proteomic patterns for detection of prostate cancer.
\newblock \emph{Journal of the National Cancer Institute}, 94\penalty0
  (20):\penalty0 1576--1578, 2002.

\bibitem[Recht et~al.(2010)Recht, Fazel, and Parrilo]{recht10a}
B.~Recht, M.~Fazel, and P.~A. Parrilo.
\newblock Guaranteed minimum rank solutions to linear matrix equations via
  nuclear norm minimization.
\newblock \emph{SIAM Review}, 52\penalty0 (3):\penalty0 471--501, 2010.

\bibitem[Schapire and Singer(1999)]{schapire99a}
R.~Schapire and Y.~Singer.
\newblock Improved boosting algorithms using confidence-rated predictions.
\newblock \emph{Machine Learning}, 37:\penalty0 297--336, 1999.

\bibitem[Sch\"{o}lkopf et~al.(1998)Sch\"{o}lkopf, Smola, and
  M\"{u}ller]{scholkopf98a}
B.~Sch\"{o}lkopf, A.~J. Smola, and K.-R. M\"{u}ller.
\newblock Nonlinear component analysis as a kernel eigenvalue problem.
\newblock \emph{Neural Computation}, 10\penalty0 (3):\penalty0 1299--1319,
  1998.

\bibitem[Shalev-{S}hwartz et~al.(2004)Shalev-{S}hwartz, Singer, and
  Ng]{shalev-shwartz04a}
S.~Shalev-{S}hwartz, Y.~Singer, and A.~Ng.
\newblock Online and batch learning of pseudo-metrics.
\newblock In \emph{Proceedings of the 21st International Conference on Machine
  Learning (ICML)}, 2004.

\bibitem[Shawe-{T}aylor and Cristianini(2004)]{shawe-taylor04a}
J.~Shawe-{T}aylor and N.~Cristianini.
\newblock \emph{Kernel Methods for Pattern Analysis}.
\newblock Cambridge University Press, 2004.

\bibitem[Smith(2005)]{smith05a}
S.T. Smith.
\newblock Covariance, subspace, and intrinsic {C}ram{\'e}r-{R}ao bounds.
\newblock \emph{IEEE Transactions on Signal Processing}, 53\penalty0
  (5):\penalty0 1610--1630, 2005.

\bibitem[Song et~al.(2007)Song, Smola, Borgwardt, and Gretton]{song08a}
L.~Song, A.~Smola, K.~Borgwardt, and A.~Gretton.
\newblock Colored maximum variance unfolding.
\newblock In J.C. Platt, D.~Koller, Y.~Singer, and S.~Roweis, editors,
  \emph{Advances in Neural Information Processing Systems}, volume~20, pages
  1385--1392. MIT Press, 2007.

\bibitem[Strehl et~al.(2000)Strehl, Ghosh, and Mooney]{strehl00a}
A.~Strehl, J.~Ghosh, and R.~Mooney.
\newblock Impact of similarity measures on web-page clustering.
\newblock In \emph{Workshop on Artificial Intelligence for Web Search (AAAI)},
  2000.

\bibitem[Torresani and Lee(2006)]{torresani07a}
L.~Torresani and K.~Lee.
\newblock Large margin component analysis.
\newblock In B.~Sch\"{o}lkopf, J.~Platt, and T.~Hoffman, editors,
  \emph{Advances in Neural Information Processing Systems}, volume~19, pages
  1385--1392. MIT Press, 2006.

\bibitem[Tsuda et~al.(2005)Tsuda, Ratsch, and Warmuth]{tsuda05a}
K.~Tsuda, G.~Ratsch, and M.~Warmuth.
\newblock Matrix exponentiated gradient updates for on-line learning and
  {B}regman projection.
\newblock \emph{Journal of Machine Learning Research}, 6:\penalty0 995--1018,
  2005.

\bibitem[Warmuth(2007)]{warmuth07a}
M.~Warmuth.
\newblock Winnowing subspaces.
\newblock \emph{Proceedings of the 24th international conference on Machine
  learning (ICML)}, 2007.

\bibitem[Weinberger and Saul(2009)]{weinberger09a}
K.~Weinberger and L.~Saul.
\newblock Distance metric learning for large margin nearest neighbor
  classification.
\newblock \emph{Journal of Machine Learning Research}, 10\penalty0
  (Feb):\penalty0 207--244, 2009.

\bibitem[Weinberger et~al.(2004)Weinberger, Sha, and Saul]{weinberger04a}
K.~Weinberger, F.~Sha, and L.~Saul.
\newblock Learning a kernel matrix for nonlinear dimensionality reduction.
\newblock \emph{Proceedings of the 21st International Conference on Machine
  Learning (ICML)}, pages 839--846, 2004.

\bibitem[Xing et~al.(2002)Xing, Ng, Jordan, and Russell]{xing02a}
E.~P. Xing, A.~Y. Ng, M.~I. Jordan, and S.~Russell.
\newblock Distance metric learning with application to clustering with
  side-information.
\newblock In S.~Becker, S.~Thrun, and K.~Obermayer, editors, \emph{Advances in
  Neural Information Processing Systems}, volume~15, pages 505--512. MIT Press,
  2002.

\bibitem[Yang(2006)]{yang06a}
L.~Yang.
\newblock Distance metric learning: a comprehensive survey.
\newblock Technical report, Michigan State University, 2006.

\bibitem[Zhuang et~al.(2009)Zhuang, Tsang, and Hoi]{zhuang09a}
J.~Zhuang, I.~Tsang, and S.~Hoi.
\newblock Simple{NPKL}: simple non-parametric kernel learning.
\newblock \emph{Proceedings of the 26th International Conference on Machine
  Learning (ICML)}, 2009.

\end{thebibliography}
\bibliographystyle{plainnat}

\end{document}